
\documentclass[10pt,twocolumn,letterpaper]{article}

\usepackage[review]{cvpr}      

\usepackage{graphicx}
\usepackage{amsmath}
\usepackage{amssymb}
\usepackage{booktabs}
\usepackage{algorithm}
\usepackage{algorithmic}
\usepackage{microtype}
\usepackage{amsmath, amssymb, amsthm}
\usepackage{mathtools}
\usepackage{mathrsfs}
\usepackage{booktabs}
\usepackage{xcolor}
\usepackage{array}

\usepackage{amsmath}

\usepackage{makecell}
\usepackage{arydshln}
\usepackage{cellspace}
\usepackage{subcaption}

\usepackage{times}
\usepackage{latexsym}
\usepackage{graphicx}
\usepackage{amsmath}
\usepackage{multirow}
\usepackage{booktabs} 
\usepackage{tablefootnote}
\usepackage{colortbl} 
\usepackage{paralist} 
\usepackage{soul} 
\usepackage{xspace}
\usepackage{tabularx}
\newcommand{\ourmodel}{\textsc{DynaFed}\xspace}

\def\bbE{\mathbb{E}}

\def\cL{\mathcal{R}}
\def\cJ{\mathcal{J}}
\def\bg{\boldsymbol{g}}
\def\bx{\boldsymbol{x}}
\def\by{\boldsymbol{y}}

\def\bv{\boldsymbol{v}}
\def\bw{\boldsymbol{w}}


\def\bw{\boldsymbol{w}}

\def\bX{\boldsymbol{X}}

\def\cA{\mathcal{A}}

\def\cD{\mathcal{D}}

\def\cL{\mathcal{L}}
\def\cM{\mathcal{M}}

\def\cI{\mathcal{I}}
\def\cX{\mathcal{X}}

\newtheorem{lemma}{Lemma}

\newtheorem{remark}{Remark}

\newcommand{\w}{\boldsymbol{w}}

\definecolor{lightgrey}{HTML}{dcdbdb}
\sethlcolor{lightgrey}

%
\usepackage[pagebackref,breaklinks,colorlinks]{hyperref}

\usepackage[capitalize]{cleveref}
\crefname{section}{Sec.}{Secs.}
\Crefname{section}{Section}{Sections}
\Crefname{table}{Table}{Tables}
\crefname{table}{Tab.}{Tabs.}

\newtheorem{assumption}{Assumption}
\newtheorem{theorem}{Theorem}
\newtheorem{proposition}{Proposition}


\allowdisplaybreaks
\begin{document}

\title{\ourmodel: Tackling Client Data Heterogeneity with Global Dynamics}

\author{First Author\\
Institution1\\
Institution1 address\\
{\tt\small firstauthor@i1.org}
\and
Second Author\\
Institution2\\
First line of institution2 address\\
{\tt\small secondauthor@i2.org}
}
\maketitle

\begin{abstract}
The Federated Learning (FL) paradigm is known to face challenges under heterogeneous client data. Local training on \textit{non-iid} distributed data results in deflected local optimum, which causes the client models drift further away from each other and degrades the aggregated global model's performance. A natural solution is to gather all client data onto the server, such that the server has a global view of the entire data distribution. Unfortunately, this reduces to regular training, which compromises clients' privacy and conflicts with the purpose of FL. In this paper, we put forth an idea to collect and leverage global knowledge on the server without hindering data privacy. We unearth such knowledge from the dynamics of the global model's trajectory. Specifically, we first reserve a short trajectory of global model snapshots on the server. Then, we synthesize a small pseudo dataset such that the model trained on it mimics the dynamics of the reserved global model trajectory. Afterward, the synthesized data is used to help aggregate the deflected clients into the global model. We name our method \ourmodel, which enjoys the following advantages: 1) we do not rely on any external on-server dataset, which requires no additional cost for data collection; 2) the pseudo data can be synthesized in early communication rounds, which enables \ourmodel to take effect early for boosting the convergence and stabilizing training; 3) the pseudo data only needs to be synthesized once and can be directly utilized on the server to help aggregation in subsequent rounds. Experiments across extensive benchmarks are conducted to showcase the effectiveness of \ourmodel. We also provide insights and understanding of the underlying mechanism of our method.

\end{abstract}
\begin{figure}[t]
        \centering
        \includegraphics[width=1.0\linewidth]{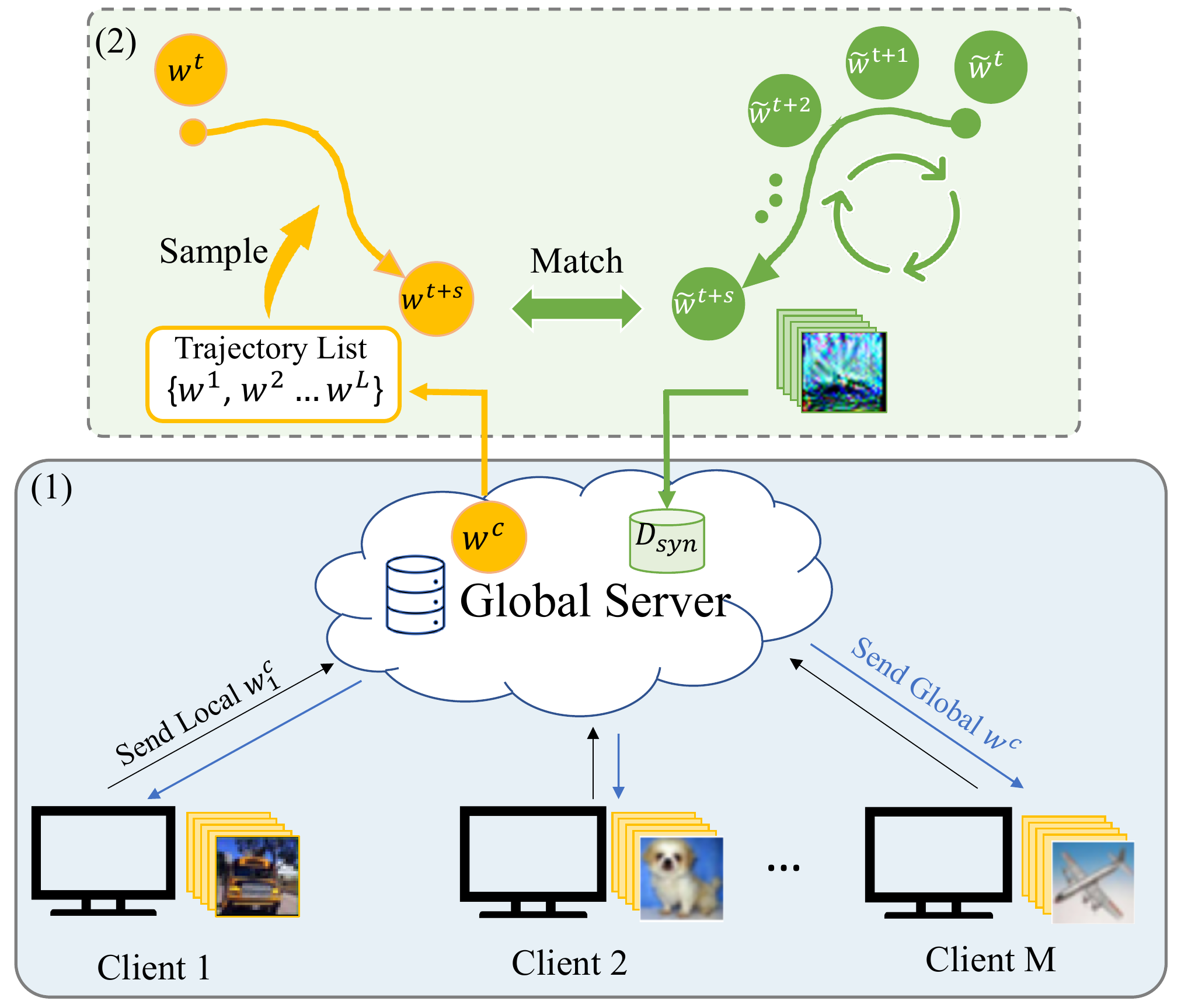}
    \caption{Illustration of \ourmodel. Firstly, we run the standard FedAvg for $L$ communication rounds and save the checkpoints to form a trajectory of the global model at the server. Then, we synthesize a pseudo dataset $\cD_\text{syn}$, with which the network can be trained to mimic the dynamics of the global trajectory. In this way, the knowledge that captures the essential information about the global data distribution is transferred from the global dynamics to $\cD_\text{syn}$. Afterward, $\cD_\text{syn}$ is adopted to help aggregate the deflected clients into the global model at the server.}
     \vspace{-0.33cm}
    \label{fig:framework}
\end{figure}
\section{Introduction}
Federated learning (FL) has become a popular distributed training paradigm to alleviate the server's computational burden and preserve clients' data privacy~\cite{bonawitz2019towards,li2020federated,kairouz2021advances, xie2022optimizing}. In the FL paradigm, the clients only have access to their private datasets, while the server is responsible for aggregating the clients' updates into a global model. The most prevalent approaches in FL are based on local-SGD \cite{mcmahan2017communication} (also referred to as FedAvg), where the client model is updated locally for multiple steps before being sent and merged on the server. Such approaches save communication costs and perform well given the client data are \textit{iid}-distributed. However, in real-world applications such as healthcare \cite{rieke2020future,sheller2020federated, jochems2017developing,kairouz2021advances} and bio-metrics \cite{aggarwal2021fedface}, the client data usually demonstrates heterogeneity (highly \textit{non-iid}), which deflects the local optimum from the global optimum \cite{zhao2018federated, li2019convergence, karimireddy2019scaffold} and makes the locally trained clients biased. Therefore, naively averaging the client models results in slow convergence and 
performance drop.

To alleviate the difficulty of training under heterogeneity, a few lines of work have been frequently discussed. 
The first line attempts to modify the local training process, including imposing regularization on the client models~\cite{li2020federated,karimireddy2019scaffold,li2021model,acar2021federated} and data sharing or augmentation~\cite{shin2020xor, oh2020mix2fld,YoonSHY21,zhao2018federated}.
However, these solutions have a high requirement for the server's control of local clients. 
An orthogonal line focuses on refining the global model in the server aggregation process~\cite{wang2020tackling,yeganeh2020inverse, xiao2021novel,lin2020ensembleFedDF, chen2020fedbe}.
The majority of these methods typically require a large external dataset on the server, then use it to align the outputs of the global model with that of the client ensemble~\cite{lin2020ensembleFedDF, chen2020fedbe, xiao2021novel,Fedaux}. Unfortunately, such a large-scale task-related dataset is often hard to acquire in reality.
To circumvent this limitation, a few data-free knowledge distillation (KD) approaches are recently proposed~\cite{zhu2021data, zhang2022fine}.  These methods attempt to transfer the knowledge contained in the global model to a generator, which is subsequently leveraged to produce pseudo data to either help local training at the clients \cite{zhu2021data} or finetune the global model at the server \cite{zhang2022fine}. It is clear that these methods require a global model with reasonable performance to ensure the generation of helpful pseudo data.
However, such requirement is hard to achieve in practice since the global model often performs poorly under heterogeneity, especially in the early rounds of training.
Other solutions include personalized FL~\cite{t2020personalized,hanzely2020lower,li2021ditto,kulkarni2020survey}, simlarity clustering~\cite{dennis2021heterogeneity,ghosh2019robust,briggs2020federated}, meta learning~\cite{li2022federated, jiang2019improving}, etc.


Even though the above-mentioned works propose techniques to alleviate the challenges posed by heterogeneity to some extent, they do not tackle the issue from its root cause: the data is unevenly scattered at different clients and is kept locally due to privacy concerns, which is also the main obstacle for training an 
accurate global model.
Ideally, imagine if we can collect all the client data to the server, then training can be directly conducted at the server, and the heterogeneity issue no longer exists. However, this reduces to regular training and conflicts with the original purpose of FL to protect client privacy. We then raise a natural question: \textit{is it possible to derive the essential information about the global data distribution on the server to help training without compromising client privacy?}

Despite the global model typically performing poorly due to heterogeneity, the changes in its parameters are steered jointly by the data scattered at different clients. Therefore, the update dynamics of the global model contain knowledge about global data distribution. Driven by this intuition, we propose \ourmodel to explicitly unearth such knowledge hidden in the global dynamics and transfer it to a pseudo dataset $\cD_\text{syn}$ at the server. $\cD_\text{syn}$ can then approximate the global data distribution on the server to aid aggregation. More specifically, inspired by recent works in dataset condensation \cite{zhao2021dataset, cazenavette2022dataset, wang2022cafe}, we formulate the data synthesis process into a learning problem, which minimizes the distance between the trajectory trained with $\cD_\text{syn}$ and the global model trajectory derived with $\cD$. Fine-tuning the aggregated global model with  $\cD_\text{syn}$ effectively alleviates the performance degradation caused by deflected clients. An appealing feature of our \ourmodel is that the data can be synthesized using just the global model's trajectory of the first few rounds, which enables $\cD_\text{syn}$ to take effect and help aggregation from early rounds. In addition, the synthesizing process only needs to be conducted once in practice, after which the derived $\cD_\text{syn}$ can be directly applied in subsequent rounds to help aggregate the deviated client models.

Notably, our framework can be readily applied to the majority of FL approaches, since we rely on only the history of the global model's parameters for synthesizing $\cD_\text{syn}$, which is available in the conventional setting of FedAvg-based methods. Furthermore, because we extract global knowledge using the global dynamics, rather than any client-specific information as in~\cite{yin2018byzantine, huang2021evaluating,hatamizadeh2022gradvit, jeon2021gradient,fowl2021robbing}, the derived $\cD_\text{syn}$ comprises of information mixed with the entire global data distribution, thus prevents leakage of client privacy.

Our \ourmodel possesses the following advantages compared with previous approaches: 1) It leverages the knowledge of the global data distribution to alleviate the aggregation bias of the global model without depending on any external datasets; 2) \ourmodel is able to generate informative data in the early rounds of federated learning, which significantly helps convergence and stabilizes training in subsequent rounds; 3) Compared with \cite{zhu2021data, zhang2022fine}, which need to keep updating the generator throughout all communication rounds, the data synthesis process in our method only needs to be done once, which reduces the computational overhead.
In summary, we make the following contributions: 
\begin{itemize}
\setlength{\itemsep}{0pt}
\setlength{\parsep}{0pt}
\setlength{\parskip}{0pt}
    \item We propose a practical approach named \ourmodel for tackling the heterogeneity problem, which extracts and exploits the hidden information from the global model's trajectory. In this way, the server can access the essential knowledge of the global data distribution to reinforce aggregation;
    \item We experimentally show the synthesized dataset helps stabilize training, boost convergence and achieve 
    significant performance improvement
    under heterogeneity;
    \item We provide insights and detailed analysis into the working mechanisms of the proposed \ourmodel both experimentally and theoretically.
\end{itemize}

\section{Related Work}

\paragraph{Regularization-based Methods} FedAvg\cite{mcmahan2017communication} is the most widely used technique in FL, which periodically aggregates the local models to the global model in each communication round. FedProx \cite{li2020federated} proposes to impose a proximal term during local training, such that the local model does not drift too far from its global initialization; Scaffold \cite{karimireddy2019scaffold} introduces a control variate and variance reduction to alleviate the drift of local training. Moon \cite{li2021model} proposes to leverage the similarity between model representations to regularize the client local training. FedDyn \cite{acar2021federated} introduces the linear and quadratic penalty terms to correct the clients' objective during local training. These methods are orthogonal to our approach and can be jointly used.
\vspace{-0.35cm}
\paragraph{Data-Dependent Knowledge Distillation Methods} This line of work attempts to distill client ensemble knowledge into the global model. \cite{lin2020ensembleFedDF} proposes to use an unlabeled external dataset on the server to match the global model's outputs with that of the client ensemble. On top of this, \cite{chen2020fedbe} further proposes to sample and combine higher-quality client models via a Bayesian model ensemble. Subsequently, some advanced techniques, such as pre-training~\cite{Fedaux} and weighted consensus distillation scheme~\cite{cho2022heterogeneous} are proposed. These methods typically require a large amount of data following similar distribution as the task data, which is usually hard to acquire in practice. Besides, these methods need to conduct KD with a large dataset in every communication round, which introduces prohibitive computational overhead.
\paragraph{Data-free Knowledge Distillation Methods} Recently, a few works have proposed to perform KD in a data-free manner by synthesizing data with generative models~\cite{zhu2021data, zhang2022fine}. \cite{zhu2021data} proposes to train a lightweight generator on the server, which produces a feature embedding conditioned on the class index. The generator is then sent to the clients to regularize local training. \cite{zhang2022fine} trains a class conditional GAN\cite{goodfellow2020generative} on the server, where the global model acts as the discriminator. The pseudo data is then used to finetune the global model. These methods all depend on the global model for training the generator. Unfortunately, the model performance is often poor under high data heterogeneity, which makes the quality of the pseudo data questionable. On the other hand, due to the use of update dynamics of the global model rather than an individual model, our method can synthesize high-quality data containing rich global information even if the global model performs poorly.

\section{Preliminary}
\textbf{Federated Learning.} 
Suppose we have $M$ clients in a federated learning system. 
For each client $m\in[M]$, a private dataset ${\mathcal{D}}_m = \{(\bx_i, y_i)\}_{i=1}^{|\mathcal{D}_m|}$ is kept locally. 
The overall optimization goal of the FL system is to jointly train a global model which performs well on the combination of local data, denoted as $\mathcal{D} = \cup_{m=1}^M \mathcal{D}_m$, where $\cD$ is from a global distribution.
Let $\alpha_m = \frac{|{\mathcal{D}}_m|}{|\mathcal{D}|}$ denote the portion of data samples on client $m$. Let $\bw \in \mathbb{R}^d$ denote the model parameter to optimize, and $ \mathcal{L}_m{(\bw,\mathcal{D}_m)} = \frac{1}{|\mathcal{D}_m|}\sum_{\xi \in \mathcal{D}_m} \ell(\bw, \xi)$ denote the empirical risk with the loss function $\ell(\cdot, \cdot)$.
The optimization problem of a generic FL system can be formulated  as follows:
\begin{equation}
\small
    \min_{\bw}{\mathcal{L}(\bw,\mathcal{D}) ={\sum_{m=1}^{M} \alpha_{m}{\mathcal{L}_m{(\bw,\mathcal{D}_m)}}}}.\label{eqn:obj}
\end{equation}

\textbf{FedAvg.} 
The main-stream solutions of Eqn. \ref{eqn:obj} rely on local-SGD to reduce the communication cost of transferring gradients. FedAvg~\cite{mcmahan2017communication} is the most prevalent approach, which uses a weighted average to aggregate the locally trained models into a global model in each communication round.
Typically, not all the clients participate in every communication round. Suppose $\mathcal{M}^c \subset [M]$ is the set of the participated clients in the $c$-th round, and $P^c=\sum_{m\in \mathcal{M}^c} \alpha_m$. The aggregation process for the global model $\bw^{c+1}$ at the end of $c$-th communication round can be formulated as: 
\begin{equation}
\small
    \bw^{c+1}=\frac{1}{P^c}\sum_{m\in \mathcal{M}^c} \alpha_m{\bw_m^{c}},\label{eq_FedAVG}
\end{equation}
where $\bw_m^{c}$ denotes the client $m$'s locally trained model. After the aggregation, the updated global model $\bw^{c+1}$ is then distributed to and client and utilized to initiate the $c+1$-th round of the training.

\section{Proposed Method}\label{sec:proposed_method}
In this section, we introduce our proposed method \ourmodel. The overall framework is illustrated in Figure \ref{fig:framework}. Firstly, we collect a trajectory of  the global model's updates in the early phase of federated training, with which we construct a synthetic dataset $\cD_\text{syn}=\{\bX, \by\}$ on the server side. Then, we utilize $\cD_\text{syn}$ to aid the server-side aggregation, which effectively helps recover the performance drop caused by deflected local models.

\subsection{Acquiring Global Knowledge by Data Synthesis}
Our goal is  to construct a pseudo dataset $\cD_\text{syn}$, which achieves a similar effect during training as the global dataset $\cD$. In other words, the trajectory of a network trained with $\cD_\text{syn}$ should have similar dynamics with the trajectory trained with $\cD$. 
To be precise, we denote the trajectory trained on $\cD$ as a sequence $\{\bw^{\text{t}}\}^L_{t=0}$, 
where L is the length of the trajectory.  In order to reduce the number of unrolling steps during optimization, we align the trajectory in a segment-by-segment manner. Without loss of generality, we consider a segment from $\bw^t$ to $\bw^{t+s}$, then the problem becomes the following: starting from $\bw^t$, the network should arrive at a place close to $\bw^{t+s}$ after being trained for $s$ steps on $\cD_\text{syn}$. We further formulate the data synthesis task into a learning problem as follows:
\small{
\begin{gather}
\min_{\bX, \by}\bbE_{t\sim U(1, L-s)}
[d(\Tilde{\bw}, \bw^\text{t+s})]\label{eqn:outer_loop}
\\
s.t. ~\Tilde{\bw} = \cA(\bX, \by, \bw^\text{t}, s)\label{eqn:inner_loop}
\end{gather}}In the inner loop expressed by Eqn.\ref{eqn:inner_loop}, we run the trainer $\cA(\cdot)$ that trains a neural network initialized from $\bw^\text{t}$ on the synthetic dataset $\cD_\text{syn}=\{\bX, \by\}$ for $s$ steps, which arrives at $\Tilde{\bw}$. In the outer loop, we minimize the distance between $\bw^\text{t+s}$ and $\Tilde{\bw}$, denoted as $d(\Tilde{\bw}, \bw^\text{t+s})$, by optimizing over $(\bX, \by)$. The expectation of the uniform distribution $U$ is adopted to take into account all the segments along the
trajectory. $d(\cdot, \cdot)$ is a general distance measure, which can take the form of euclidean distance or cosine distance, etc.
During the optimization, we treat both $(\bX, \by)$ to be learnable variables, where $\bX$ is initialized with random noise, and $\by$ is initialized with equal probabilities over all labels. The detailed algorithm is shown in Algorithm \ref{alg:serverdistill}.
\begin{algorithm}[h]
\small
\renewcommand\arraystretch{0.8}
\caption{DataSyn}\label{alg:serverdistill}
\begin{algorithmic}[1]
\REQUIRE Global trajectory $\{\bw^c\}^L_1$, learning rate $\eta$, training rounds $N$, inner steps $s'$.
\STATE Randomly initialize $\bX_0$ and pair them with $\by$ to form the synthetic dataset.
\FOR{training iteration $n = 1, 2 \ldots N$}
\STATE Sample $t\sim U(1, L-s)$, then take $\bw^\text{t}$ and $\bw^\text{t+s}$ from the trajectory. 
\STATE Get the trained paramters $\Tilde{\bw} = \cA(\bX_n, \by_n, \bw^\text{t}, s')$
\STATE Calculate the distance $d(\Tilde{\bw},\bw^\text{t+s})$ and obtain gradient w.r.t $\bX_n$ and $\by_n$ as $\nabla_{\bX_n}d(\Tilde{\bw},\bw^\text{t+s})$ and $\nabla_{\by_n}d(\Tilde{\bw},\bw^\text{t+s})$.
\STATE Update $\bX_n$ and $\by_n$ using gradient descent $\bX_{n+1} = \bX_n - \eta\nabla_{\bX_n}d(\Tilde{\bw},\bw^\text{t+s})$, $\by_{n+1} = \by_n - \eta\nabla_{\by_n}d(\Tilde{\bw},\bw^\text{t+s})$
\ENDFOR

\textbf{Output:} Optimized synthetic data $\cD_\text{syn}$.
\end{algorithmic}
\end{algorithm}

Note that since the size of $\cD_\text{syn}$ is much
smaller than $\cD$, the effective step size should have a different scale. Therefore, in Equation \ref{eqn:inner_loop}, we may increase the number of steps from $s$ to $s'$ to account for this scale mismatch. 

In this way, the knowledge hidden in the dynamics of the global model can be transferred to $\cD_\text{syn}$, which then acts as an approximation of the global data distribution to aid the server-side aggregation.
\subsection{Overall Algorithm of DynaFed}
In this section, we present our \ourmodel that integrates the data synthesis process into the federated learning framework. 

Firstly, to construct the global trajectory on the server, we collect the global model's checkpoints from the first few communication rounds of FedAvg mainly considering the following two factors: 1) collecting a long trajectory induces prohibitive cost due to the expensive global communication, 2) the change in global model's parameters becomes insignificant during late rounds, which makes it difficult to extract knowledge from the dynamics.

Note that although $\cD_\text{syn}$ contains rich global knowledge, it is not sufficient to replace the global data distribution $\cD$ due to the following reasons: 1) the objective in Eqn.\ref{eqn:outer_loop} can not be ideally solved due to the two-level optimization procedure, 2) to make the scale of the optimization problem acceptable, the size of  $\cD_\text{syn}$ can not be as large as $\cD$, 3) there exists an inconsistency between the trainer $\cA(\cdot)$ and the one that produces the global trajectory, i.e., FedAvg. Therefore, instead of simply conduct regular training with $\cD_\text{syn}$, we leverage such $\cD_\text{syn}$ to help aggregation by finetuning the global model on the server side.

The rundown of the entire algorithm is presented in Algorithm \ref{alg:full_alg}. Firstly, the global model's trajectory in the earliest $L$ rounds is collected and stored on the server. Then, the server executes the data synthesis procedure to generate the pseudo data $\cD_\text{syn}$. In all subsequent rounds, $\cD_\text{syn}$ is leveraged on the server to help reduce the negative impact of deflected client models by finetuning the global model.

\begin{algorithm}[t]
\small
\renewcommand\arraystretch{0.8}
\caption{\ourmodel}\label{alg:full_alg}
\begin{algorithmic}[1]
\REQUIRE Client data ${\mathcal{D}}_m = \{(\bx_i, y_i)\}_{i=1}^{|\mathcal{D}_m|}$, global parameters $\bw$, client parameters $\{\bw_m\}^{|\mathcal{M}|}_{m=1}$. Total communication rounds $C$, 
local steps $T$,
data learning rate $\eta$,  
data training rounds $N$,  trajectory length $L$, inner steps for data synthesis $s'$.
\FOR{communication round $c = 1, 2 \ldots C$}
\STATE Sample active clients $\cM^c$ randomly. Distribute $\bw^c$ to active clients and initialize their parameters.
\FOR{all users $m \in \cM^c$}
\STATE $\{\bw_m^{c}\}^{|\cM^c|}_{m=1}$ $\leftarrow$ LocalTrain($\bw^{c}$, $T$)
\ENDFOR
\STATE Clients send updated $\{\bw^c_m\}^{|\mathcal{M}^c|}_{m=1}$ to server.
\STATE $\bw^{c+1}\leftarrow\frac{1}{|\cM^c|}\sum_{m\in\cM^c}\bw_m^{c}$
\IF{$c=L$}
\STATE $\cD_\text{syn}$ $\leftarrow$ DataSyn($\{\bw^c\}^L_{c=1}$, $\eta$, $s'$, $N$)
\STATE \ELSIF{$c>L$}
\STATE $\bw^{c+1}$ $\leftarrow$ Finetune($\cD_\text{syn}$, $\bw^{c+1}$)
\ELSE 
\STATE Add $\bw^{c+1}$ into trajectory list.
\ENDIF 
\ENDFOR

\end{algorithmic}
\end{algorithm}


Our method enjoys the following appealing properties: 
\begin{itemize}
\setlength{\itemsep}{0pt}
\setlength{\parsep}{0pt}
\setlength{\parskip}{0pt}
    \item In contrast to data-dependent KD methods \cite{lin2020ensembleFedDF, chen2020fedbe}, \ourmodel extracts the knowledge of the global data distribution from the global model trajectory, which does not depend on any external datasets;
    \item Compared with data-free KD methods \cite{zhu2021data, zhang2022fine}, \ourmodel is able to synthesize informative pseudo data in the early rounds of federated learning without requiring the global model to be well trained, which significantly helps convergence and stabilizes training;
    \item The data synthesis only needs to be conducted once. Besides, since only a few samples are synthesized, refining the global model on the server requires negligible time. 
\end{itemize}

\begin{remark}
We emphasize that our \ourmodel does not raise privacy concerns given following reasons: 1) we rely on only the trajectory of the global model's parameters, which is available in the conventional setting of all FedAvg-based methods; 2) we keep $\cD_{syn}$ at the server rather than sending it to the clients; 3) we extract global knowledge using the global dynamics, rather than any client-specific information as in~\cite{yin2018byzantine, huang2021evaluating,hatamizadeh2022gradvit, jeon2021gradient,fowl2021robbing}, the derived $\cD_\text{syn}$ comprises of information mixed with the entire global data distribution, thus prevents leakage of client privacy; 4) our \ourmodel shares a similar flavor as dataset condensation (DC) methods, which aims to generate informative pseudo data containing global knowledge, rather than real-looking data. The privacy-preserving ability of DC was also discussed in previous work \cite{dong2022privacy}.
\end{remark}

\section{Theoretical Analysis}
\begin{table*}[t!]
\small
    \centering
 \vspace{-0.25cm}
\scalebox{0.9}{
\begin{tabular}{c!{\vrule width 0.5pt}ccc!{\vrule width 0.5pt}ccc!{\vrule width 0.5pt}ccc}

\toprule
 &        \multicolumn{3}{c}{$\alpha=0.01$} & \multicolumn{3}{c}{$\alpha=0.04$} & \multicolumn{3}{c}{$\alpha=0.16$}\\
Method & FMNIST &  CIFAR10 &CINIC10 &    FMNIST &  CIFAR10 &CINIC10 &  FMNIST &  CIFAR10 &CINIC10\\
\midrule
FedAVG     & 74.50$\pm$1.32  &   39.30$\pm$3.42 &  31.60$\pm$5.50 & 81.74$\pm$1.98 &   51.19$\pm$2.85    &  45.35$\pm$3.00&  89.54$\pm$1.51 & 69.74$\pm$1.29& 55.40$\pm$2.05\\
FedProx    &   76.88$\pm$1.83 &  42.13$\pm$3.64 &  32.56$\pm$4.59&   83.06$\pm$2.53   & 58.93$\pm$2.14 &  46.30$\pm$2.87& 89.53$\pm$1.13 &  70.20$\pm$0.74 &  57.78$\pm$2.08\\
Scaffold    & 77.92$\pm$0.87  &   42.04$\pm$2.26 &  34.90$\pm$3.34 & 82.25$\pm$1.35 &   54.23$\pm$1.90    &  46.22$\pm$2.18&  88.54$\pm$0.32 & 68.57$\pm$0.91& 54.30$\pm$0.83\\
\midrule
FedDF$^*$     &  72.36$\pm$2.08 &  39.73$\pm$3.98 &  31.97$\pm$4.31& 81.65$\pm$0.97     & 54.20$\pm$2.93 &  45.79$\pm$2.95& 89.70$\pm$0.97& 70.71$\pm$0.94&  55.78$\pm$1.02\\
FedBE$^*$    &  72.33$\pm$1.79 &  38.36$\pm$3.74&  32.04$\pm$3.73 & 81.31$\pm$1.25     & 53.49$\pm$2.36 &  45.50$\pm$2.88& 89.62$\pm$0.75& 70.23$\pm$0.76&  55.42$\pm$1.37\\
ABAVG$^*$    &  75.98$\pm$1.99 &  39.95$\pm$1.37&  32.75$\pm$4.18 & 84.88$\pm$1.84     & 57.25$\pm$3.42 &  47.39$\pm$3.36 & 89.53$\pm$1.12& 70.55$\pm$2.41&  56.02$\pm$1.49\\
\midrule
FedGen$^\dagger$    &  75.59$\pm$1.12 &  40.19$\pm$2.14 &  32.59$\pm$3.25& 81.46$\pm$1.08     & 56.60$\pm$1.29&  45.57$\pm$2.70 & 89.95$\pm$0.89& 70.89$\pm$0.54&  55.34$\pm$1.13\\
FedMix$^\dagger$    & 81.34$\pm$0.68  &   50.48$\pm$1.23 &  37.15$\pm$1.81 & 84.23$\pm$0.50 &   62.77$\pm$1.07    &  50.22$\pm$1.41&  89.05$\pm$0.24 & 70.33$\pm$0.55& 56.74$\pm$0.45\\
\textbf{DynaFed$^\dagger$}    &   \textbf{87.52$\pm$0.15}  &   \textbf{65.53$\pm$0.34}&  \textbf{48.04$\pm$0.70} & \textbf{89.45$\pm$0.11} &   \textbf{70.07$\pm$0.12}      &  \textbf{55.43$\pm$0.24}&  \textbf{91.35$\pm$0.07}& \textbf{74.69$\pm$0.14}&  \textbf{59.80$\pm$0.10}\\
\bottomrule
\end{tabular}
}
\caption{Comparison of test performances achieved by different FL methods with different degrees of data heterogeneity $\alpha$ across multiple datasets. We report the mean test accuracy of last five communication rounds. $^*$Methods assume the availability of proxy data. $^\dagger$ Methods are based on data sharing or generation. We observe that our approach outperforms other methods by a large margin, and its advantage is more prominent on more challenging datasets with higher heterogeneity. Specifically, \ourmodel demonstrates relative improvement over the FedAvg baseline by 17.5\%, 64.5\%, 52.0\%, and 82.2\% on FMNIST, CIFAR10, CINIC10, and CIFAR100, respectively.}
\label{tab:main_exp}
\vspace{-0.35cm}
\end{table*}

In this section, we present some insights to understand our data synthesis process from the neural tangent kernel (NTK) theory and also give the convergence results for \ourmodel. Detailed proofs are given in the appendix. 

In our data synthesis process, we align the segments (e.g., from $t$ to $t+s$ ) of the trajectories trained on $\cD$ and $\cD_\text{syn}$ for any $t$. Essentially, those segments are the cumulative gradients calculated using $\cD$ and $\cD_\text{syn}$, which approximate to $\nabla \cL(\bw_t, \cD_{\text{syn}} )\Delta t$ and $\nabla \cL(\bw^t, \cD )\Delta t$. Therefore, we can expect $\nabla \cL(\bw, \cD_{\text{syn}} )$ would be close to $\nabla \cL(\bw, \cD )$, which is verified by our experimental result (left of Figure.\ref{fig:data_quality}). Given a sample $\bx$, if continuous-time gradient descent is adopted as the training solver, the dynamics of neural network function trained on $\cD_\text{syn}$ and $\cD$ take the forms of 
\begin{align}
\begin{cases}
    &\frac{d{f}(\bx,{\bw}^t)}{dt} = \nabla_{{\bw}^t}{f}(\bx,{\bw}^t)^\top\nabla  \cL({\bw}^t, \cD_{\text{syn}}),\\
     &\frac{df(\bx,\bw^t)}{dt} = \nabla_{\bw^t}f(\bx,\bw^t)^\top\nabla  \cL(\bw^t, \cD).
\end{cases}\label{eqn:ode}
\end{align}
which is close to an ordinary differential equation according to the NTK theory\cite{jacot2018neural}. 
Note that the right-hand sides of the two equations in (\ref{eqn:ode}) are close if $\nabla \cL(\bw, \cD_{\text{syn}} )\approx \nabla \cL(\bw, \cD )$.  In this case, the following lemma about the continuous dependence of differentiable equation shows that the neural functions $f(\bx, \bw^t)$ learned with pseudo data $\cD_\text{syn}$ are similar to that learned with real global data $\cD$ during the whole training process. This further indicates that $\cD_\text{syn}$ achieves similar effect as $\cD$ during training.

\begin{lemma}[Continuous Dependence\cite{wolfgang1998ordinary}] Suppose $\tilde{F}(t,f)$ and $F(t,f)$ are two continuous functions in a region $G$ satisfying 
$$|\tilde{F}(t,f)-F(t,f)|\leq \epsilon, \forall (t,f) \in G.$$
Further, we assume $F(t,f)$ satisfy the $L_F$-Lipschitz condition w.r.t., $f$.
Let $\tilde{f}(t)$ and $f(t)$ be the solutions of initial problems, $$\frac{d\tilde{f}}{dt}=\tilde{F}(t,\tilde{f}) \mbox{ and } \frac{df}{dt}=F(t,f),$$
with $\tilde{f}(t_0) = f_0$ and $f(t_0) = f_0$. Then, in a common region $|t-t_0|\leq \alpha$, we have the following estimation:
$$|\tilde{f}(t)-f(t)| \leq \frac{\epsilon}{L_F}\left( e^{\alpha L_F}-1\right).$$
\end{lemma}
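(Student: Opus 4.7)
The plan is to reduce the ODE problem to an integral inequality in $|\tilde{f}(t)-f(t)|$ and then close it using a Gronwall-type argument. First I would rewrite both initial value problems in integral form, i.e. $\tilde{f}(t) = f_0 + \int_{t_0}^{t} \tilde{F}(s,\tilde{f}(s))\,ds$ and $f(t) = f_0 + \int_{t_0}^{t} F(s,f(s))\,ds$, which is permissible since both $\tilde{F}$ and $F$ are continuous and the solutions exist on $|t-t_0|\le \alpha$. Subtracting and taking absolute values gives
\begin{equation*}
|\tilde{f}(t)-f(t)| \;\le\; \int_{t_0}^{t} \bigl|\tilde{F}(s,\tilde{f}(s)) - F(s,f(s))\bigr|\,ds.
\end{equation*}

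Next, the standard add-and-subtract trick: insert $\pm F(s,\tilde{f}(s))$ inside the integrand and split via the triangle inequality. The first piece $|\tilde{F}(s,\tilde{f}(s)) - F(s,\tilde{f}(s))|$ is bounded by $\epsilon$ using the hypothesis, and the second piece $|F(s,\tilde{f}(s)) - F(s,f(s))|$ is bounded by $L_F|\tilde{f}(s)-f(s)|$ using the Lipschitz assumption on $F$. Setting $u(t) := |\tilde{f}(t)-f(t)|$, this yields the linear integral inequality
\begin{equation*}
u(t) \;\le\; \epsilon\,(t-t_0) \;+\; L_F \int_{t_0}^{t} u(s)\,ds,
\end{equation*}
valid for $t_0 \le t \le t_0+\alpha$ (the case $t<t_0$ is symmetric and handled by replacing $t-t_0$ with $|t-t_0|$).

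The final step is to apply Gronwall's inequality in its sharp (integral) form: if $u(t)\le a(t) + b\int_{t_0}^{t} u(s)\,ds$, then $u(t)\le a(t) + b\int_{t_0}^{t} a(s)\,e^{b(t-s)}\,ds$. With $a(s) = \epsilon(s-t_0)$ and $b = L_F$, one integration by parts (or a direct substitution $\tau = t-s$) collapses the right-hand side to exactly $\frac{\epsilon}{L_F}\bigl(e^{L_F(t-t_0)}-1\bigr)$, at which point the bound on $|t-t_0|\le \alpha$ delivers the stated estimate. Alternatively, one can bypass Gronwall entirely by defining $v(t) := \int_{t_0}^{t} u(s)\,ds$ and noting that $v'(t) - L_F v(t) \le \epsilon(t-t_0)$; multiplying by the integrating factor $e^{-L_F(t-t_0)}$ and integrating gives the same closed-form bound.

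I expect the only real subtlety to be producing the exact constant $\epsilon/L_F$ rather than the weaker but more common bound $\epsilon\,\alpha\, e^{\alpha L_F}$ that arises from the naive Gronwall form $u(t)\le a(t)e^{b(t-t_0)}$. Getting the sharper constant requires either the integral-form Gronwall or the integrating-factor argument sketched above, and a small but careful computation of $\int_{t_0}^{t}(s-t_0)e^{L_F(t-s)}\,ds$. Everything else is routine and standard; since the lemma is quoted from Walter's ODE textbook, I would simply cite it in the final write-up and keep the proof to the three-step outline above.
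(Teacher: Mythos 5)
Your proof is correct and follows essentially the canonical route: the paper does not prove this lemma at all but simply quotes it from Walter's ODE textbook, where the argument is exactly the integral-equation reformulation, add-and-subtract, and sharp (integral-form) Gronwall step you describe. Your computation of $\int_{t_0}^{t}(s-t_0)e^{L_F(t-s)}\,ds$ does collapse the bound to the stated constant $\tfrac{\epsilon}{L_F}\left(e^{\alpha L_F}-1\right)$ rather than the cruder $\epsilon\alpha e^{\alpha L_F}$, so nothing further is needed.
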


To analyze the convergence of \ourmodel, we need to define some additional notations  and rewrite our method as follows. Suppose with the $\cD_{\text{syn}}$ generated from the data synthesis process,  in \ourmodel we run SGD for $\tau_1$ and $\tau_2$ iterations in each local training round and finetuning process, respectively. Let the  sets $\cI$ and $\cJ$ be 
\begin{align}
  &\cI = \{t| t = k(\tau_1 + \tau_2)+ c, k =0,1,2,\ldots, c \in [\tau_1]\},\nonumber \\
  &\cJ = \{t| t = k(\tau_1 + \tau_2)+ \tau_1, k =0,1,2,\ldots\},
\end{align}
where $[\tau_1]= \{0,1,\ldots, \tau_1-1\}$. Therefore, when $t\in \cI$, we perform local training, while when $t\notin \cI$, we conduct finetuning. $\cJ$ denotes the time index for aggregation. The detailed steps of \ourmodel can be rewritten as 
\begin{align}
\small
    &\bv_{t+1}^m= \begin{cases}
    \bw_m^t - \eta_t\nabla  \ell(\bw_m^t,\xi_m^t), &\mbox{if } t \in \cI\\
    \bw_m^t - \eta_t\nabla  \cL(\bw_m^t, \cD_{\text{syn}}),& \mbox{if } t\notin \cI 
    \end{cases},\nonumber\\
    &\bw_m^{t+1} = \begin{cases}
    \bv_m^{t+1}, & \mbox{ if } t+ 1 \notin \cJ\\
    \sum_{m=1}^M \alpha_m \bv_m^{t+1}, &\mbox{ if } t+1 \in \cJ
    \end{cases},\nonumber
\end{align}
where $\xi_m^t  \sim \cD_m$. Based on the notations, we define a sequence: 
\begin{small}
{
\setlength\abovedisplayskip{0mm}
\setlength\belowdisplayskip{1mm}
\begin{align}
\small
    \bar{\bw}^{t} = \sum_{m=1}^M \alpha_m \bw_m^t.\nonumber
\end{align}
}\end{small}Hence, our algorithm is an integration of FedAvg and a biased GD. For $\bar{\bw}^t$, note that $\bar{\bw}^t =\bw_1^t=\cdots=\bw_M^t$ in the finetuning process and we have the following convergence results:
\begin{theorem}[Convergence]  For $\tilde{L}$-smooth, $\mu$-strongly convex loss functions $\ell(\cdot, \cdot)$. We assume $\| \nabla\cL(\bw, \cD_{\text{syn}}) - \nabla \cL(\bw, \cD)\| \leq \delta \|\nabla \cL(\bw, \cD)\| + \epsilon$ holds with two small non-negative scalars $\delta$ and $\epsilon$. Let $\eta_t = \frac{c}{t}$ for a proper constant $c$. Then, \ourmodel satisfies 
\begin{align}
\small
    \mathbb{E}\cL( \bar{\bw}^T, \cD) - \cL( {\bw}^*, \cD) \leq \frac{ C}{T},
\end{align}
where  $\bw^*$ is the minimum of $\cL(\bw, \cD)$ and  $C$ is a constant, whose detailed formula is given in the appendix.
\end{theorem}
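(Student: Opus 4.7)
The overall plan is to adapt the standard FedAvg-on-non-iid analysis (e.g., \cite{li2019convergence}) by treating \ourmodel as an alternation between a local-SGD phase (when $t\in\cI$) and an additional deterministic but biased full-batch gradient descent phase on $\cL(\cdot,\cD_{\text{syn}})$ (when $t\notin\cI$). The target is to establish a unified one-step recursion
\[
\mathbb{E}\|\bar{\bw}^{t+1}-\bw^*\|^2 \leq (1-\mu_{\mathrm{eff}}\eta_t)\,\mathbb{E}\|\bar{\bw}^t-\bw^*\|^2 + \eta_t^2 B,
\]
valid in both regimes, with $\mu_{\mathrm{eff}}>0$ a slightly weakened strong-convexity constant and $B$ a problem-dependent constant absorbing the SGD variance, the client drift, and the additive $\epsilon$-bias. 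Plugging $\eta_t=c/t$ into the standard diminishing-step-size lemma then yields $\mathbb{E}\|\bar{\bw}^T-\bw^*\|^2=\mathcal{O}(1/T)$, and $\tilde{L}$-smoothness upgrades this to the claimed loss-gap bound via $\cL(\bar{\bw}^T,\cD)-\cL(\bw^*,\cD)\leq \tfrac{\tilde{L}}{2}\|\bar{\bw}^T-\bw^*\|^2$.

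For $t\in\cI$, I would replay the FedAvg analysis essentially verbatim: expand $\|\bar{\bw}^{t+1}-\bw^*\|^2$ using $\bar{\bw}^{t+1}=\bar{\bw}^t-\eta_t\sum_m\alpha_m\nabla\ell(\bw_m^t,\xi_m^t)$, then decompose into a $\mu$-strong-convexity contraction, an SGD-variance term, and a client-drift term $\sum_m\alpha_m\|\bw_m^t-\bar{\bw}^t\|^2$ which is controlled by unrolling at most $\tau_1$ steps from the last synchronisation in $\cJ$. For $t\notin\cI$, the key observation is that all $\bw_m^t$ coincide with $\bar{\bw}^t$ throughout any finetuning block (they were synchronised at the preceding time in $\cJ$ and every client then applies the same deterministic update), so
\[
\bar{\bw}^{t+1}=\bar{\bw}^t-\eta_t\nabla\cL(\bar{\bw}^t,\cD)-\eta_t\Delta_t,\quad \|\Delta_t\|\leq \delta\|\nabla\cL(\bar{\bw}^t,\cD)\|+\epsilon.
\]
Expanding the square and applying $\mu$-strong convexity to the clean-gradient piece gives a $(1-2\mu\eta_t)$ contraction; Young's inequality on the $\Delta_t$ cross term, combined with the smoothness bound $\|\nabla\cL(\bar{\bw}^t,\cD)\|\leq \tilde{L}\|\bar{\bw}^t-\bw^*\|$, produces a multiplicative correction of order $\delta\tilde{L}\eta_t$ (absorbable into the contraction provided $\delta$ is small relative to $\mu/\tilde{L}$) plus an additive $\mathcal{O}(\eta_t^2\epsilon^2)$ contribution to $B$.

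Taking the worse of the two regime-specific constants yields the unified recursion, after which a standard induction on $t$ with $\eta_t=c/t$ (for $c$ chosen so that $\mu_{\mathrm{eff}}c>1$) closes the argument and produces the explicit $C$ as a function of $\mu$, $\tilde{L}$, $\delta$, $\epsilon$, $\tau_1$, the initial gap $\|\bar{\bw}^0-\bw^*\|^2$, and the SGD-variance bound. The main obstacle is the finetuning regime: because the synthetic-data gradient is systematically biased, one must rule out bias accumulation into a non-vanishing asymptotic error. The theorem's hypothesis $\|\Delta_t\|\leq \delta\|\nabla\cL(\bar{\bw}^t,\cD)\|+\epsilon$ is tailored exactly for this, splitting the bias into a multiplicative part that strong convexity can absorb and an additive $\epsilon$-part that only inflates $C$. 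A secondary bookkeeping subtlety is the transition across aggregation boundaries in $\cJ$, but since averaging is linear and $\bw_m^t=\bar{\bw}^t$ throughout every finetuning block, $\bar{\bw}^t$ evolves continuously across these boundaries and no extra term appears in the recursion.
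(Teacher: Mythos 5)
Your proposal follows essentially the same route as the paper: both establish a one-step contraction for the local-training phase (the standard FedAvg recursion of Li et al.) and a one-step contraction for the finetuning phase (biased gradient descent, where the multiplicative $\delta$-bias is absorbed into the strong-convexity modulus under $\delta\tilde{L}<\mu$ — a step the paper outsources to a citation of Hu et al. and you derive directly), then unify the two into $\mathbb{E}\|\bar{\bw}^{t+1}-\bw^*\|^2\le(1-\tilde{\mu}\eta_t)\mathbb{E}\|\bar{\bw}^{t}-\bw^*\|^2+\eta_t^2\tilde{B}$ and close by induction with $\eta_t\propto 1/t$ followed by $\tilde{L}$-smoothness. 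The one caveat — which the paper shares, since it buries the issue inside the cited lemma — is that the additive $\epsilon$-bias enters the cross term at order $\eta_t\,\epsilon\,\|\bar{\bw}^{t}-\bw^*\|$ rather than $\eta_t^2\epsilon^2$, so strictly it produces an asymptotic error floor rather than merely inflating $C$; this does not distinguish your argument from the paper's.
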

 More detailed discussions about the convergence result are given in the Appendix.

\section{Experiments}
\paragraph{Benchmark Datasets and Experimental Settings.}
\begin{figure*}[t!]
    \centering
    \begin{subfigure}[b]{0.24\textwidth}
        \centering
        \includegraphics[width=\textwidth]{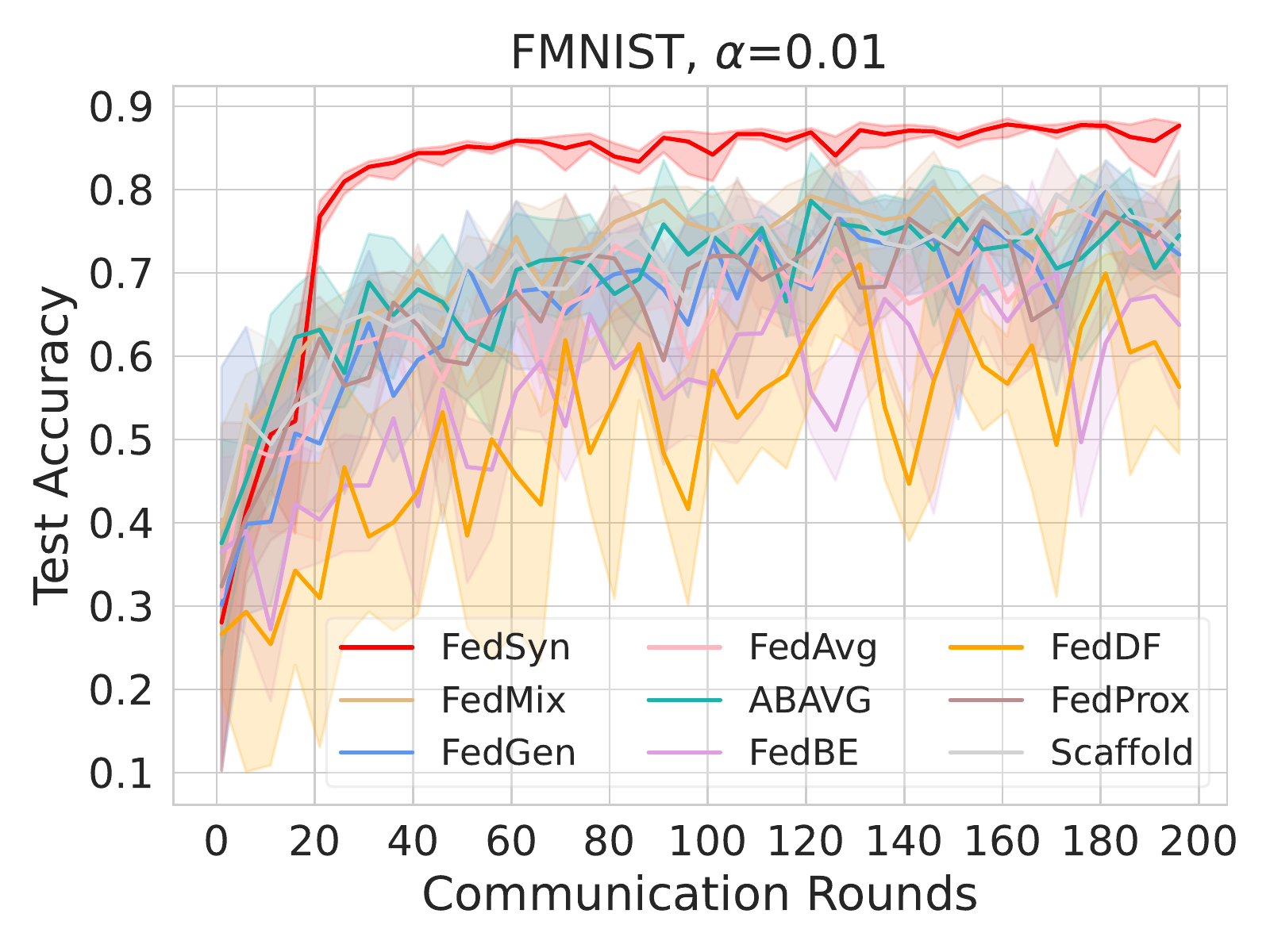}
        \label{fig:mean and std of net14}
    \end{subfigure}
    \begin{subfigure}[b]{0.24\textwidth}  
        \centering 
        \includegraphics[width=\textwidth]{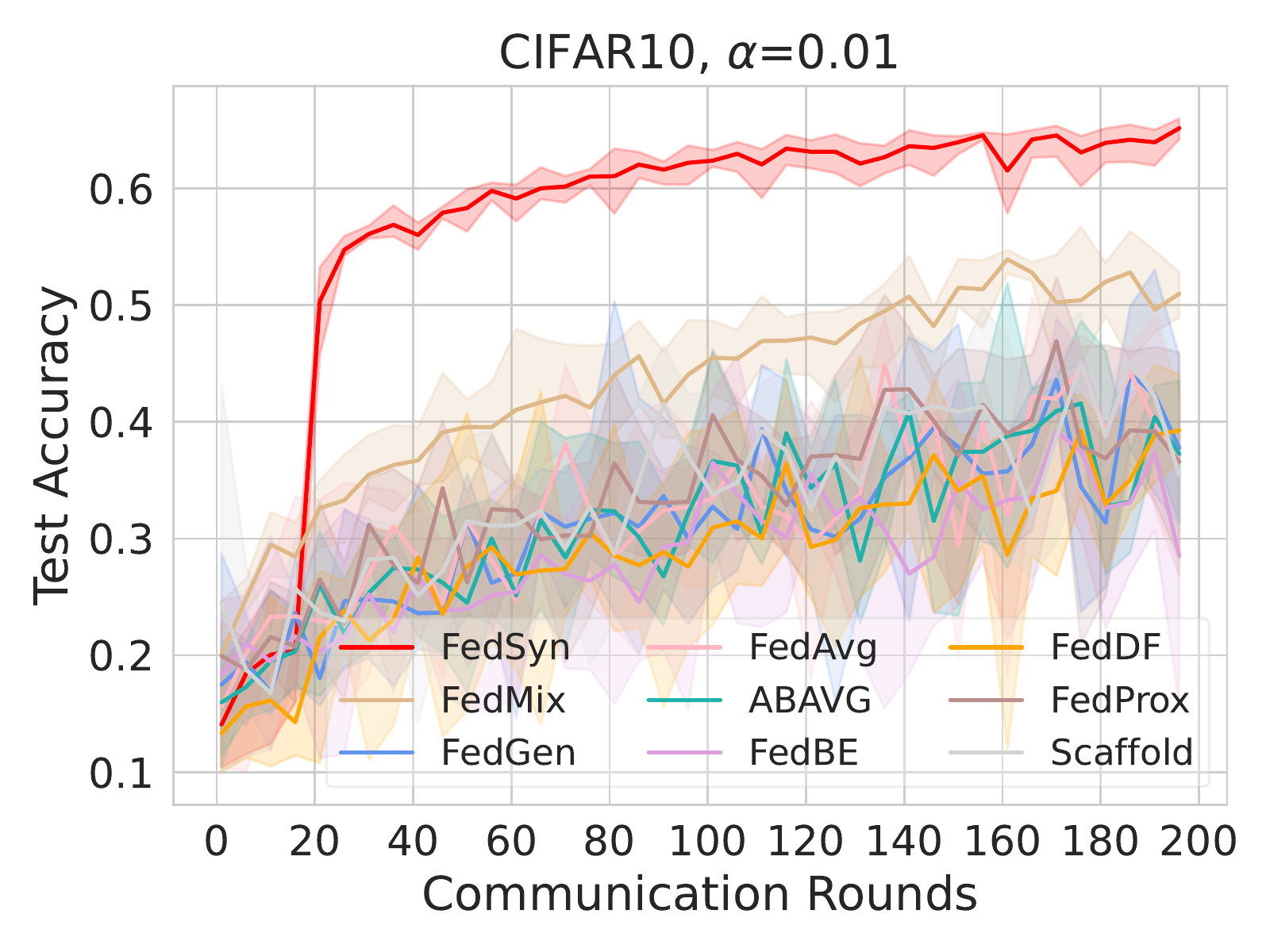}
        \label{fig:mean and std of net24}
    \end{subfigure}
    \begin{subfigure}[b]{0.24\textwidth}   
        \centering 
        \includegraphics[width=\textwidth]{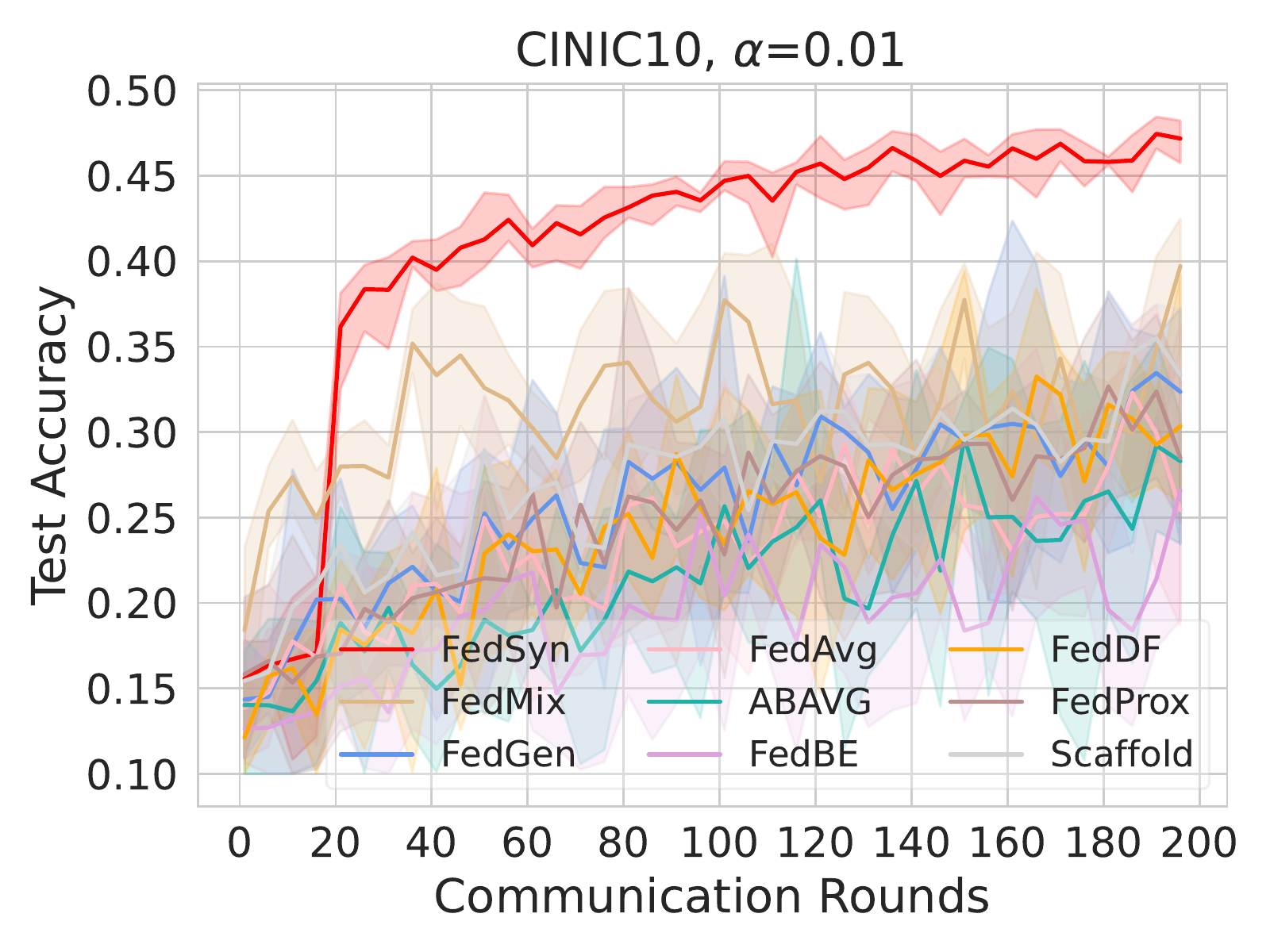}
        \label{fig:mean and std of net34}
    \end{subfigure}
    \begin{subfigure}[b]{0.24\textwidth}   
        \centering 
        \includegraphics[width=\textwidth]{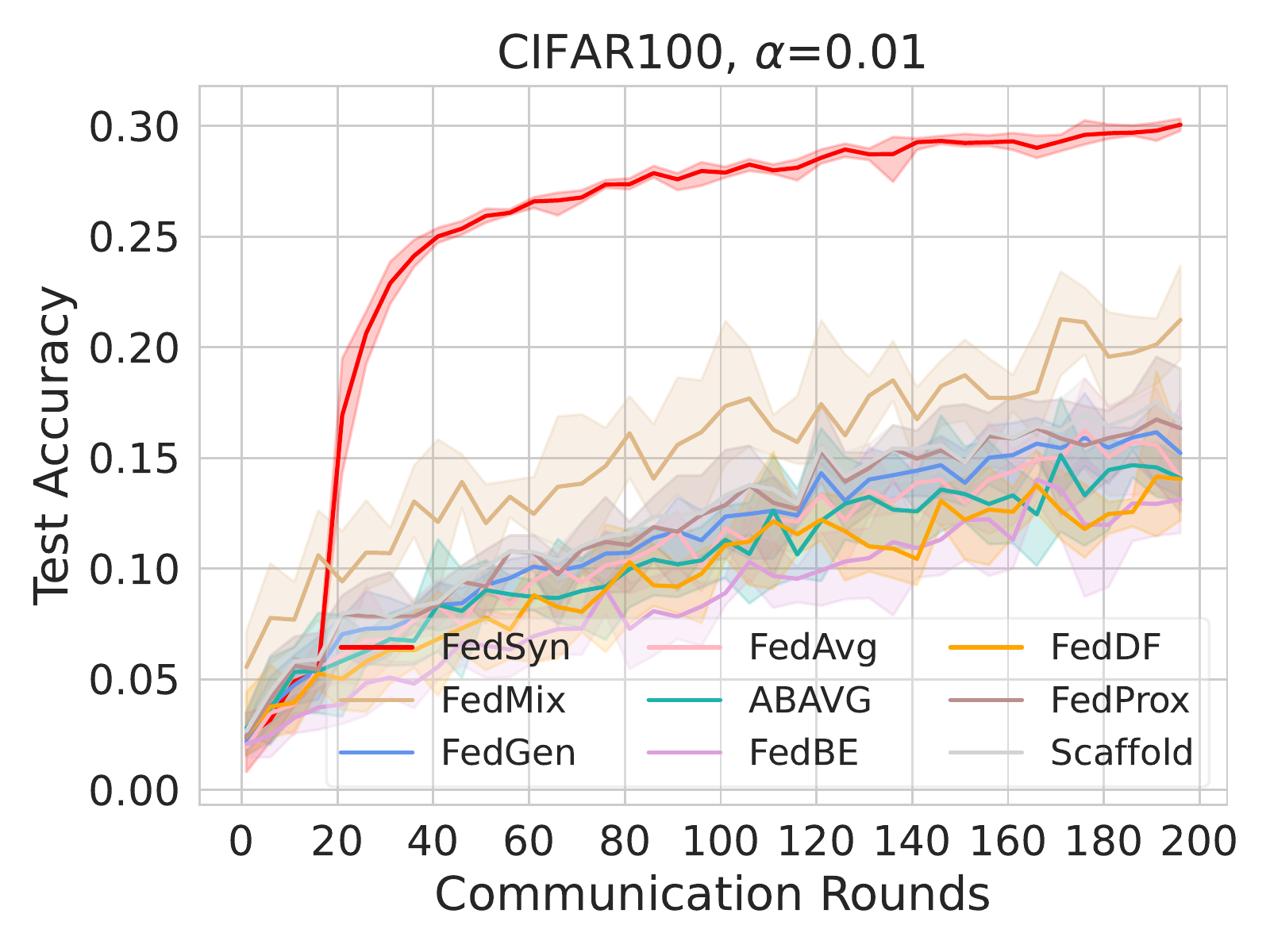}
        \label{fig:mean and std of net34}
    \end{subfigure}
    \vskip -0.49cm
    \begin{subfigure}[b]{0.24\textwidth}   
        \centering 
        \includegraphics[width=\textwidth]{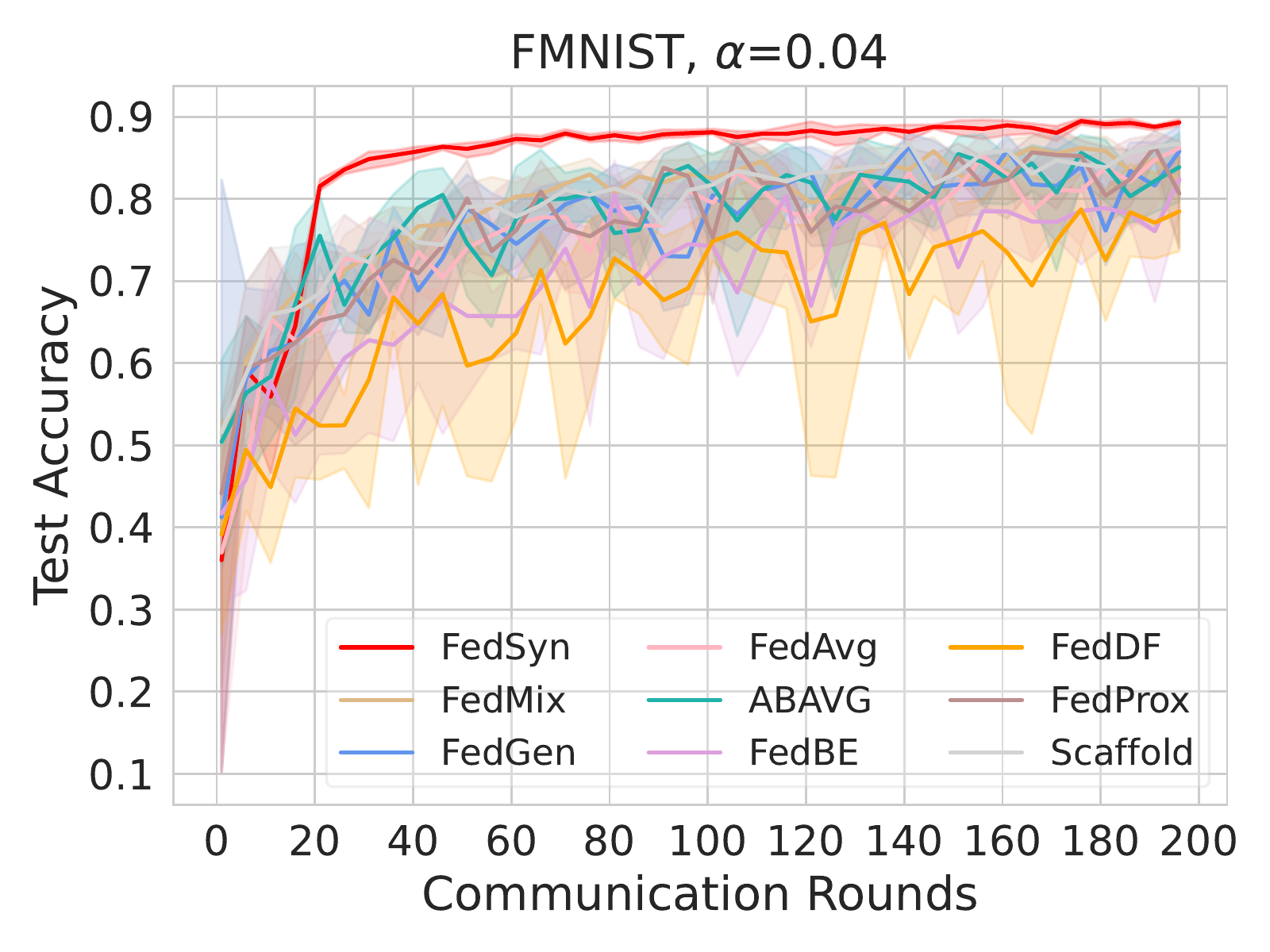}
        \label{fig:mean and std of net44}
    \end{subfigure}
    \begin{subfigure}[b]{0.24\textwidth}   
        \centering 
        \includegraphics[width=\textwidth]{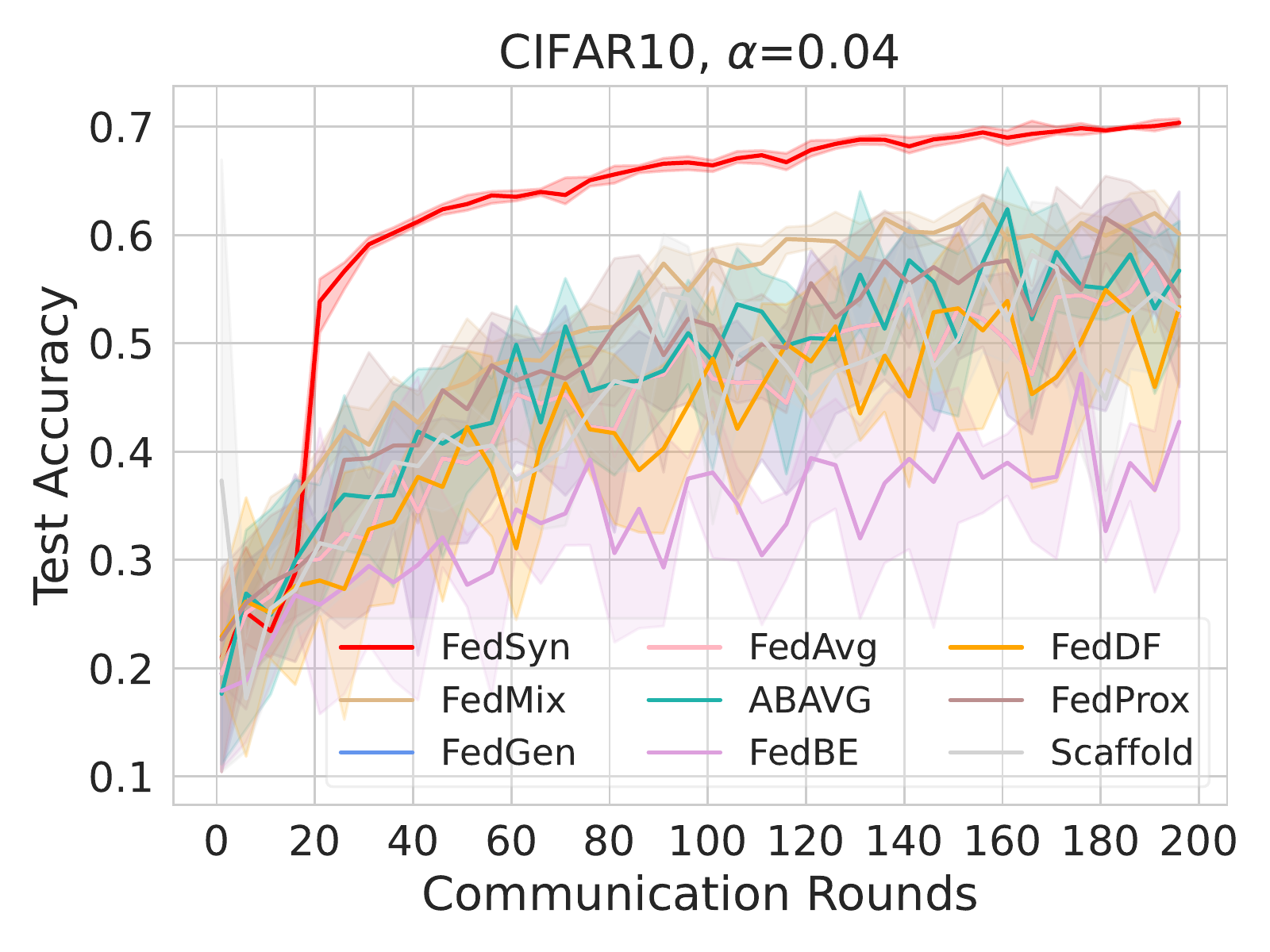}
        \label{fig:mean and std of net44}
    \end{subfigure}
    \begin{subfigure}[b]{0.24\textwidth}   
        \centering 
        \includegraphics[width=\textwidth]{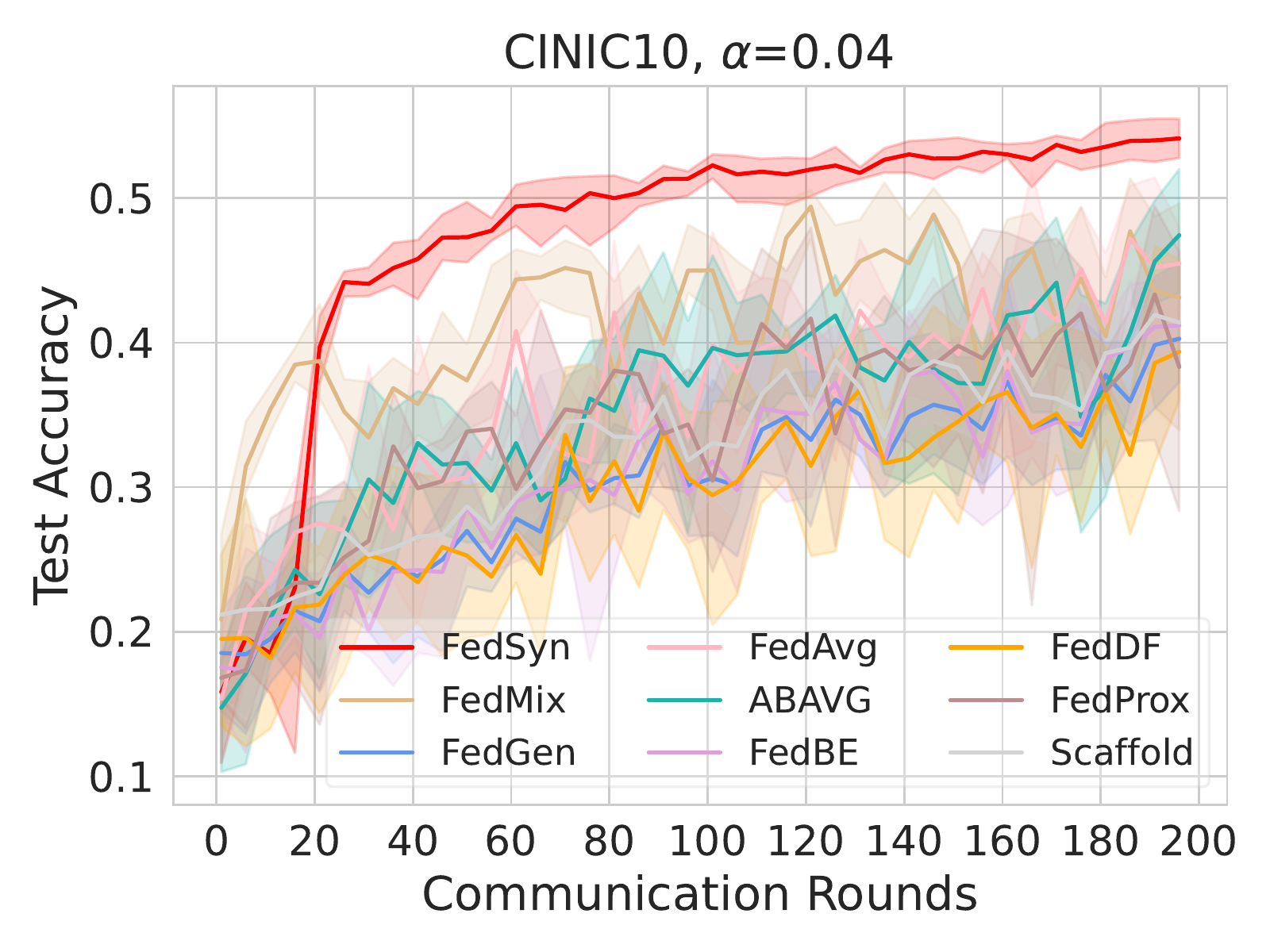}
        \label{fig:mean and std of net44}
    \end{subfigure}
    \begin{subfigure}[b]{0.24\textwidth}   
        \centering \includegraphics[width=\textwidth]{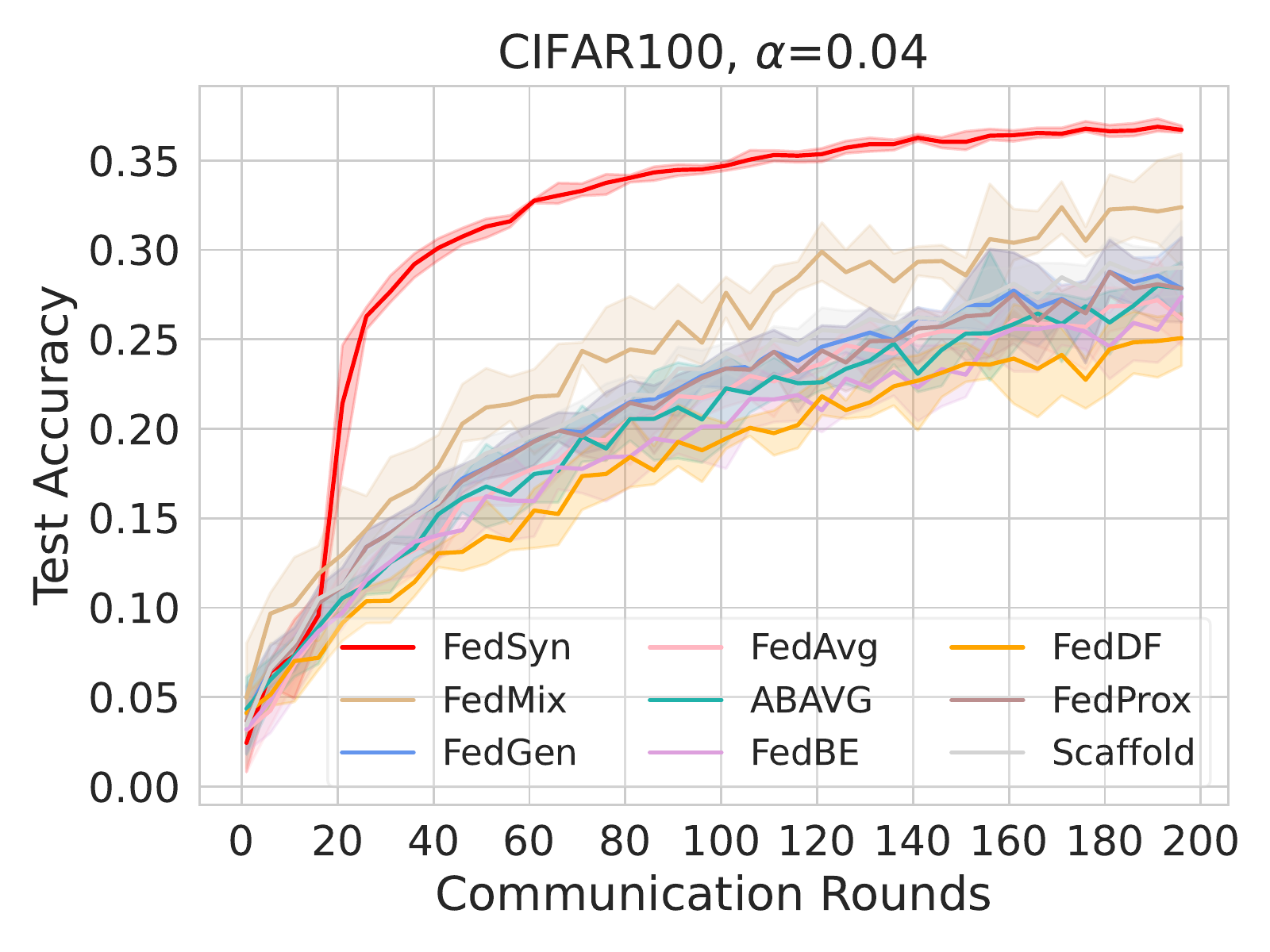}
        \label{fig:mean and std of net44}
    \end{subfigure}
    \vspace{-0.5cm}
    \caption{\small Visualization of global model's test performance on various datasets throughout the global communication rounds. We can see that the global model rapidly converges to a satisfactory test accuracy once $\cD_\text{syn}$ participates in refining the global model. Furthermore, $\cD_\text{syn}$ also helps reduce the fluctuation of model performances between communication rounds, which significantly boosts the training stability. \ourmodel requires less than 10\% communication rounds to achieve comparable performance with the baseline methods.} 
    \label{fig:process_curve}
\end{figure*}
 \begin{table}[t!]
\tiny
    \centering
 \vspace{-0.25cm}
\begin{scriptsize}
\begin{tabular}{ccccc}

\toprule
Method &   $\alpha=0.01$ & $\alpha=0.04$& $\alpha=0.08$& $\alpha=0.16$ \\
\midrule
FedAVG     &   16.54$\pm$2.18 & 26.56$\pm$1.53& 34.54$\pm$1.02&   39.65$\pm$0.94 \\
FedProx   &  18.46$\pm$1.05  & 28.58$\pm$1.46 & 34.82$\pm$0.54&  40.98$\pm$0.49 \\
Scaffold    &   17.33$\pm$1.21& 28.46$\pm$1.18& 35.04$\pm$0.35&   40.57$\pm$0.33 \\
\midrule
FedDF$^*$     & 16.02$\pm$1.94  & 26.94$\pm$1.25&  34.77$\pm$0.88&   39.76$\pm$0.44 \\
FedBE$^*$   & 15.78$\pm$2.34  & 28.03$\pm$0.34&   33.91$\pm$0.79 & 39.45$\pm$0.79\\
ABAVG    &  16.52$\pm$1.98 & 29.14$\pm$0.57&   34.66$\pm$0.98   & 41.00$\pm$0.23\\
\midrule
FedGen$^\dagger$   &  16.51 $\pm$1.32& 27.03$\pm$1.14&   34.56$\pm$0.78 & 39.96$\pm$0.58\\
FedMix$^\dagger$   &  23.54$\pm$0.96 & 32.18$\pm$0.59&  36.30$\pm$0.42 & 41.09$\pm$0.14\\
\textbf{DynaFed$^\dagger$}    &   \textbf{30.14$\pm$0.19} & \textbf{36.79$\pm$0.12}   & \textbf{40.02$\pm$0.09} & \textbf{42.47$\pm$0.06} \\
\bottomrule
\vspace{-5mm}
\end{tabular}
\caption{Comparison of test performances on CIFAR100 with different degrees of data heterogeneity $\alpha$.}\label{tab:cifar100_exp}
\vspace{-5mm}
\end{scriptsize}
\end{table}

We conduct experiments over four commonly used datasets: FashionMNIST\cite{xiao2017/online}, CIFAR10\cite{krizhevsky2009learning}, CINIC10\cite{darlow2018cinic} and CIFAR100\cite{krizhevsky2009learning}. Among them, FashionMNIST is a dataset containing grey-scale images of fashion products. CIFAR10 is an image classification dataset containing daily objects. CINIC10 is a dataset combining CIFAR10 and samples from similar classes that are downsampled from ImageNet\cite{krizhevsky2017imagenet}. These three datasets contain 10 classes. CIFAR100 contains the same data as CIFAR10, but categorizes the data into 100 classes. For each dataset, we mainly conduct experiments with heterogeneous client data distribution. We follow prior work \cite{lin2020ensembleFedDF, chen2020fedbe} to use Dirichlet distribution for simulating the non-IID data distribution, where the degree of heterogeneity is defined by $\alpha$, smaller $\alpha$ value corresponds to more severe heterogeneity. 

\vspace{-1em}
\paragraph{Baseline Methods}
We consider various state-of-the-art solutions against non-IID data distribution in the context of federated learning. Specifically, we compare with the following approaches 1) the vanilla aggregation strategy FedAVG~\cite{mcmahan2017communication}; 2) regularization-based strategies FedProx~\cite{li2020federated}, Scaffold~\cite{karimireddy2019scaffold}; 3) data-dependent knowledge distillation strategies that need external dataset  FedDF~\cite{lin2020ensembleFedDF} and FedBE~\cite{chen2020fedbe}, ABAVG~\cite{xiao2021novel}; (4) data sharing \cite{yoon2021fedmix} or data-free knowledge distillation \cite{zhu2021data} methods. Note that we do not compare with \cite{zhang2022fine} since the code is not published. Please refer to Appendix for detailed settings of the baseline methods.

\vspace{-0.5em}
\paragraph{Configurations} Unless specified otherwise, we follow \cite{Fedaux, chen2021bridging, xie2022optimizing} and adopt the following default configurations throughout the experiments: we run 200 global communication rounds with local epoch set to 1. There are 80 clients in total, and the participation ratio in each round is set to 40\%. Experiments using other participation ratios are in the Appendix. 
We report the global model's average performance in the last five rounds  evaluated using the test split of the datasets. For the construction of global trajectory, we first run FedAvg~\cite{mcmahan2017communication} and use the checkpoints from the first 20 communication rounds ($L=20$). 
We set the time difference $s$ between the start and end checkpoint to 5, and the target checkpoint is averaged with 2 checkpoints sampled between $\w^\text{t}$ and $\w^\text{t+s}$. More details can be found in the Appendix.

\subsection{Main Experiments with Data Heterogeneity} We demonstrate the superior performance of our \ourmodel  by conducting experiments on heterogeneous client data across comprehensive datasets and various heterogeneity values $\alpha$. 
Specifically, we use three datasets with 10 classes (shown in Table \ref{tab:main_exp}): FashionMNIST\cite{xiao2017/online}, CIFAR10\cite{krizhevsky2009learning} and CINIC10\cite{darlow2018cinic}, heterogeneity degree $\alpha$ set to 0.01, 0.04 and 0.16; and CIFAR100 containing 100 classes, with $\alpha$ values 0.01, 0.04 ,0.08 and 0.16 (shown in Table \ref{tab:cifar100_exp}). \ourmodel significantly boosts the convergence, stabilizes training, and brings considerable performance improvement compared with previous approaches. Specifically, with heterogeneity value $\alpha=0.01$, \ourmodel demonstrates relative improvement over the FedAvg baseline by 17.5\%, 64.5\%, 52.0\%, and 82.2\% on FMNIST, CIFAR10, CINIC10, and CIFAR100, respectively.

As demonstrated in Figure \ref{fig:process_curve}
, the performance of \ourmodel is rapidly boosted as soon as the synthesized data starts refining the global model on the server. This verifies that \ourmodel does not depend on the global model's performance in data synthesis, which is consistent with our analysis in Section \ref{sec:proposed_method}. This characteristic enables faster convergence to achieve good performance with fewer communication rounds. As shown in Figure \ref{fig:process_curve}, \ourmodel requires less than 20\% communication rounds to achieve comparable performance with the baseline methods.

\subsection{Detailed Analysis}
We conduct a detailed analysis of \ourmodel and aim to provide answers to the following questions: (1) Does $\cD_\text{syn}$ contain information about global data distribution while protecting client privacy? (2) Can we leverage just the dynamics of the early rounds to synthesize $\cD_\text{syn}$? (3) How many pseudo samples do we need to synthesize to ensure effectiveness? (4) Does $\cD_\text{syn}$ still help convergence under more severe heterogeneity and longer local training?

\begin{figure}[t]
        \centering
        \includegraphics[width=1.0\linewidth]{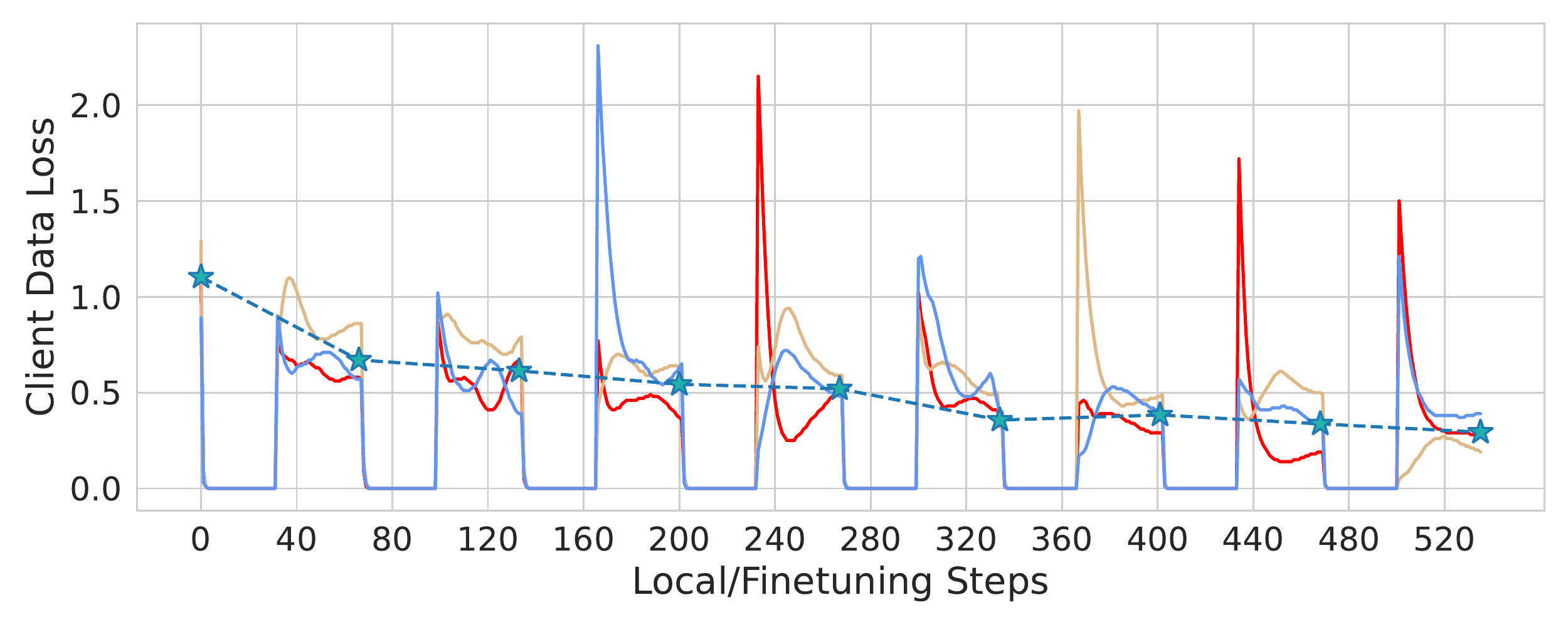}
\vspace{-6mm}
    \caption{Loss curves over each client's data throughout local training and finetuning. Each of the 3 colors represents the loss over one client's data. The stars are the global model's average losses over all client data after finetuning with $\cD_\text{syn}$. During local training, the client losses quickly converge to near zero. However, due to deflection caused by heterogeneity, the losses over some clients' data dramatically increase after aggregation. Finetuning with $\cD_\text{syn}$ decreases those losses and reduces the aggregation bias.}
     \vspace{-0.2cm}
    \label{fig:global_info}
\end{figure}
\begin{figure}
    \centering
    \begin{subfigure}[b]{0.11\textwidth}
        \centering
        \includegraphics[width=\textwidth]{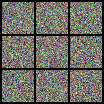}
        {{\small Iteration 0}}    
        \label{fig:mean and std of net14}
    \end{subfigure}
    \begin{subfigure}[b]{0.11\textwidth}
        \centering
        \includegraphics[width=\textwidth]{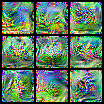}
        {{\small Iteration 125}}    
        \label{fig:mean and std of net14}
    \end{subfigure}
    \begin{subfigure}[b]{0.11\textwidth}   
        \centering 
        \includegraphics[width=\textwidth]{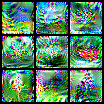}
        {{\small Iteration 250}}    
        \label{fig:mean and std of net34}
    \end{subfigure}
    \begin{subfigure}[b]{0.11\textwidth}
        \centering
        \includegraphics[width=\textwidth]{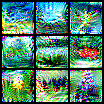}
        {{\small Iteration 500}}    
        \label{fig:mean and std of net14}
    \end{subfigure}
    \caption[ The average and standard deviation of critical parameters ]
    {\small Visualization of learned synthetic data on 3 classes from CIFAR10 throughout the optimization process. In the beginning, the pixels are randomly initialized and  contain little information. As the optimization goes on, some patterns emerge in the synthetic images but remain unrecognizable.} 
    \label{fig:privacy_of_dsyn}
    \vspace{-5mm}
\end{figure}

\vspace{-0.5em}
\paragraph{$\cD_\text{syn}$ Contains Global Information and Preserves Privacy.} We conduct experiment with CIFAR10 and set $\alpha=0.01$, where client datasets are extremely \textit{non-iid}. We track the losses calculated over each client's data throughout local training as well as the global model's finetuning. During local training, we calculate the client models' losses over their own datasets, i.e., $\cL_m(\bw_m, \cD_m)$. During finetuning, we calculate the global model's losses over each client's dataset, i.e., $\cL_m(\bw, \cD_m)$. To prevent cluttering, we randomly select 3 client datasets for illustration. The result is shown in Figure \ref{fig:global_info}, each color represents the loss over one clients' dataset. We observe that the client models easily overfit during local training due to the extreme class imbalance. The deflected client models make the aggregated global model demonstrate high loss values over some clients' data. Remarkably, we observe that finetuning with $\cD_\text{syn}$ is able to recover the global model to a reasonable state, which achieves small losses over all client datasets. This verifies that $\cD_\text{syn}$ contains information of the global data distribution. Furthermore, Figure \ref{fig:privacy_of_dsyn} presents the synthesized data of CIFAR10, client-specific information can not be observed.

\begin{figure}
    \centering
    \begin{subfigure}[b]{0.23\textwidth}
        \centering
        \includegraphics[width=\textwidth]{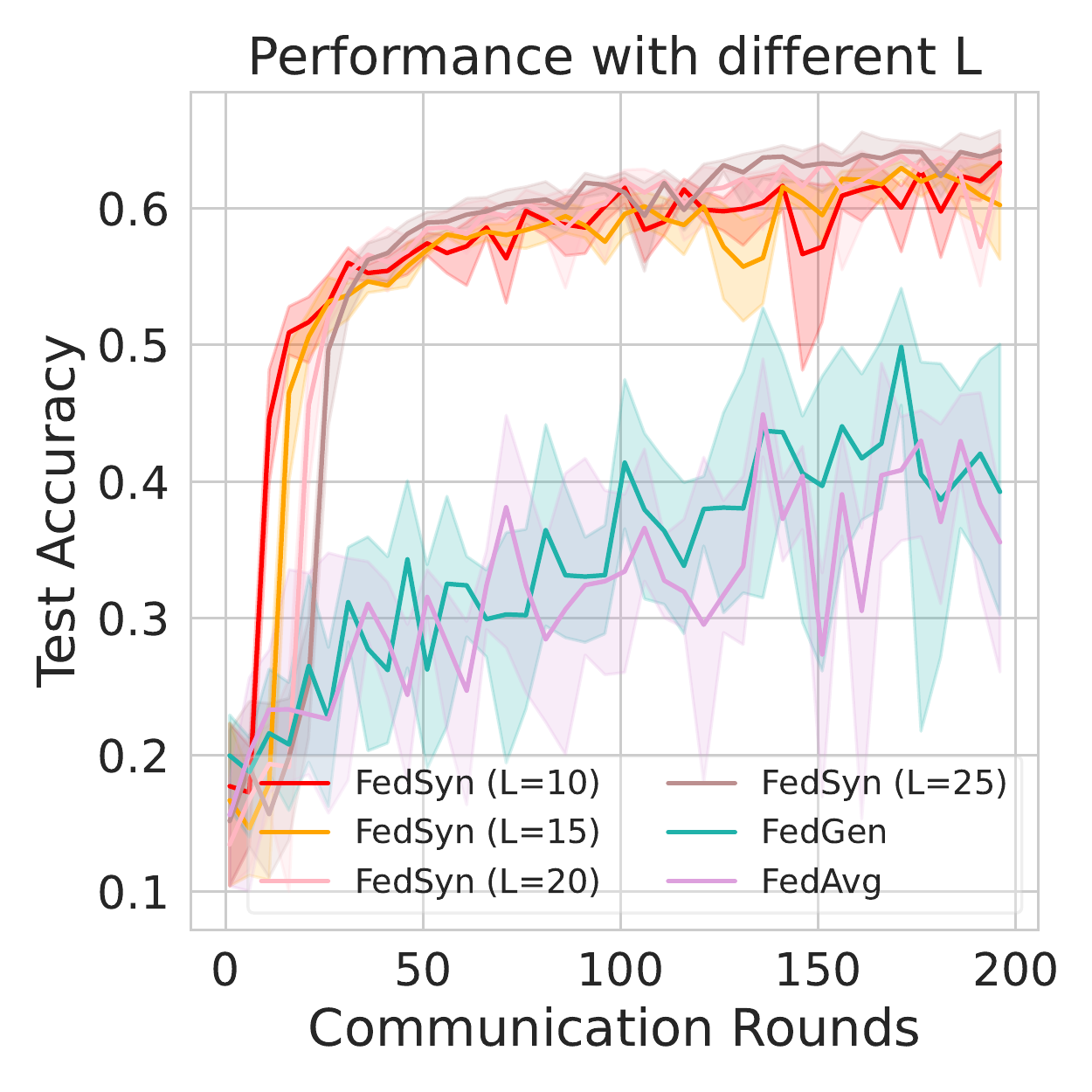}
    \end{subfigure}
    \hfill
    \begin{subfigure}[b]{0.23\textwidth}   
        \centering 
        \includegraphics[width=\textwidth]{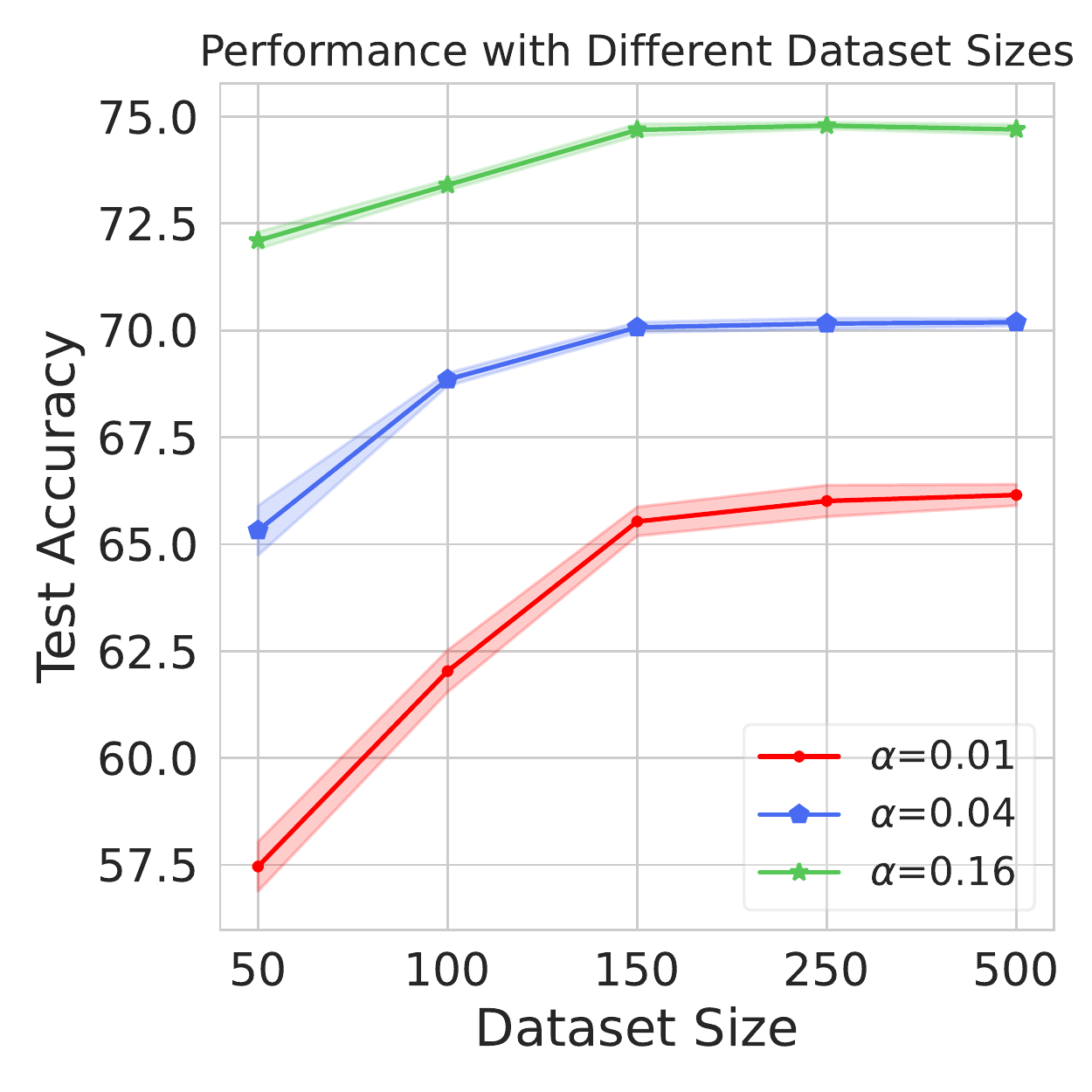}
    \end{subfigure}
    \hfill
    \caption{Left: The test accuracy curves for different choices of trajectory length L on the CIFAR10 dataset with $\alpha=0.01$. By leveraging the dynamics of the global model's trajectory in the first few rounds, e.g., $\text{L}\in \{10, 15, 20, 15\}$, the derived $\cD_\text{syn}$ already helps achieve faster convergence and stable training.  In contrast, the FedGen approach only brings slight performance gain during the later phase of training due to the dependence on the global model's performance. Right: We show the performance of \ourmodel with different sizes of $\cD_\text{syn}$  for various  $\alpha$. The performance gain is significant with just 150 synthesized samples.}
     \vspace{-0.35cm}
    \label{fig:length_size}
\end{figure}

\vspace{-0.5em}
\paragraph{$\cD_\text{syn}$ Can be Learned with Early Trajectory.}
We conduct experiments with different choices of trajectory length $L$ in left of Figure \ref{fig:length_size}. We observe that even if with $L=10$, the result is comparable with performance obtained with longer trajectory $L=25$. Compared with the baseline methods, \ourmodel achieves significant convergence speedup and performance boost. These results support our claim in Section \ref{sec:proposed_method} that our method can take effect early during training.
By contrast, the data-free KD method FedGen \cite{zhu2021data} that trains a generator to produce pseudo data starts to show a slight improvement only in the late stage of training since it depends explicitly on the global model's performance when training the generator.




\vspace{-0.5em}
\paragraph{How the Size of $\cD_\text{syn}$ Impacts the Performance.} As shown in the right of Figure \ref{fig:length_size}, we conduct experiments with different sizes of $\cD_\text{syn}$ and various heterogeneity degrees $\alpha$. We find that a small $\cD_\text{syn}$ suffices for good performance, while larger $\cD_\text{syn}$ brings only marginal performance boost. 
This property not only saves the cost for synthesizing $\cD_\text{syn}$, but also makes the finetuning of the global model more efficient. 

\vspace{-0.5em}
\paragraph{$\cD_\text{syn}$ is Able to Mimic the Global Dynamics.} In the left of Figure \ref{fig:data_quality}, we calculate the cosine distance between the target checkpoint $\bw^\text{t+s}$ and the parameters $\Tilde{\bw}$ trained from $\bw^\text{t}$ for $s'$ steps with $\cD_\text{syn}$, a randomly sampled real dataset of the same size as $\cD_\text{syn}$, and a dataset consisted of noisy pixels, respectively. We can see that in terms of mimicking the global trajectory, $\cD_\text{syn}$ not only significantly outperforms the noise dataset, but also achieves only half of the distance obtained with real dataset, which is not accessible in FL setting. This verifies the ability of $\cD_\text{syn}$ to mimic global trajectory.


 \begin{table}[t!]
 \begin{minipage}[c]{0.48\textwidth}
\tiny
    \centering
 \vspace{-0.25cm}
\begin{scriptsize}
\begin{tabular}{c!{\vrule width 0.5pt}cc!{\vrule width 0.5pt}cc}
\toprule
 &        \multicolumn{2}{c}{$\alpha=0.01$} & \multicolumn{2}{c}{$\alpha=0.04$}\\
Method &   5 epochs  & 10 epochs  &   5 epochs  & 10 epochs \\
\midrule
FedAVG     &   33.23$\pm$3.54 & 29.93$\pm$4.62 &   50.28$\pm$2.17    & 46.09$\pm$2.95  \\
FedProx   &  42.60$\pm$2.30 & 42.86$\pm$2.84 & 58.40$\pm$1.35& 54.30$\pm$1.98\\
Scaffold    &   39.43$\pm$1.86 & 36.52$\pm$2.04& 55.46$\pm$1.25 & 50.05$\pm$1.57\\
\midrule
FedDF$^*$     & 31.68$\pm$3.16  & 39.85$\pm$3.79&  52.31$\pm$2.38 & 50.90$\pm$2.53\\
FedBE$^*$   & 35.49$\pm$2.88  & 34.19$\pm$3.34& 49.78$\pm$1.79  & 51.34$\pm$1.90\\
ABAVG$^*$    &  37.87$\pm$2.57 & 35.08$\pm$3.03 & 56.81$\pm$1.94  & 52.17$\pm$2.32\\
\midrule
FedGen$^\dagger$   &  35.64$\pm$2.52 & 35.03$\pm$3.58&  57.60$\pm$1.55 & 54.48$\pm$2.03\\
FedMix$^\dagger$   &  47.36$\pm$1.24 & 41.53$\pm$1.37& 60.74$\pm$0.95  & 56.35$\pm$1.33\\
\textbf{DynaFed$^\dagger$}    &   \textbf{61.45$\pm$0.46} & \textbf{59.04$\pm$0.64}&   \textbf{68.35$\pm$0.20}   & \textbf{66.30$\pm$0.34} \\
\bottomrule
\end{tabular}
\vspace{-5pt}
\caption{Test performances on CIFAR10 achieved by different FL algorithms  under various degrees of data heterogeneity and local training epochs. Total communication rounds of 100 and 50 are set with local training epochs of 5 and 10, respectively. As can be seen, \ourmodel significantly surpasses other methods.}\label{tab:local_epoch}
\end{scriptsize}
\end{minipage}
\hspace{2mm}
\begin{minipage}[c]{0.48\textwidth}
\centering
\begin{scriptsize}
\begin{tabular}{l!{\vrule width 0.5pt}cc!{\vrule width 0.5pt}cc}
\toprule
 &        \multicolumn{2}{c}{CIFAR10}  & \multicolumn{2}{c}{CINIC10}\\
 & $\alpha = 0.01$&$\alpha = 0.04$&  $\alpha = 0.01$&$\alpha = 0.04$\\
Method& $Acc=0.45$&$Acc=0.55$& $Acc=0.33$& $Acc=0.45$\\
\midrule
FedAVG     &   132.0$\pm$15.0 & 117.0$\pm$8.0 &  189.3$\pm$10.5 & 138.7$\pm$5.6\\
FedProx    & 113.3$\pm$16.4 & 102.0$\pm$4.7 & 156.7$\pm$7.0 & 118.3$\pm$4.0\\
Scaffold    & 105.0$\pm$10.4 & 100.3$\pm$3.5 &  158.0$\pm$5.4 & 110.0$\pm$3.4\\
\midrule
FedDF$^*$   & 145.7$\pm$13.1 & 117.3$\pm$5.8 &  180.7$\pm$5.0 &170.0$\pm$7.0\\
FedBE$^*$   & 165.0$\pm$12.7 & 122.7$\pm$4.5 & 185.3$\pm$14.6   & 174.3$\pm$6.8 \\
ABAVG$^*$   & 109.7$\pm$5.4 & 110.7$\pm$5.0 & 150.0$\pm$8.4 & 127.0$\pm$5.8\\
\midrule
FedGen$^\dagger$    & 115.7$\pm$10.4 &110.3$\pm$5.7 &  167.0$\pm$12.1&128.3$\pm$5.5\\
FedMix$^\dagger$   & 77.3$\pm$3.7 & 89.3$\pm$3.5 & 79.0$\pm$7.8 & 82.3$\pm$5.5 \\
\textbf{DynaFed$^\dagger$}   & \textbf{22.3$\pm$0.6 }& \textbf{22.0$\pm$1.0} & \textbf{21.3 $\pm$1.5}  & \textbf{22.7$\pm$1.4}\\
\bottomrule
\end{tabular}
\vspace{-5pt}
\caption{Comparison of \textbf{the number of communication rounds} to reach target accuracy. With the knowledge of global data distribution stored in $\cD_\text{syn}$ at the server, the convergence speed of our \ourmodel is significantly accelerated.}\label{tab:Convergence_speed}
\vspace{-5mm}
\end{scriptsize}
\end{minipage}
\vspace{-5pt}
\end{table}

\vspace{-1em}
\paragraph{\ourmodel is Robust to Longer Local Training.} 
Longer local training is generally required in FL to reduce the total number of global communication rounds. Under different heterogeneity degrees, we conduct experiments to evaluate the impact of longer local training epochs on \ourmodel. Specifically, we conduct experiments with total communication rounds of 100 and 50  with ocal training epochs of 5 and 10, respectively. The results are presented in Table \ref{tab:local_epoch}, from which we observe the following: 1) \ourmodel consistently outperforms other methods by a large margin even with longer local training epochs; 2) the performance of \ourmodel is less sensitive to the length of local training, which benefits from the $\cD_\text{syn}$ containing information about the global data distribution. Therefore, \ourmodel is able to achieve similar performance with less global communication rounds, which is the major bottleneck in the efficiency of FL. 

We further conduct experiments on varying local epochs to measure the quality of $\cD_\text{syn}$. Specifically, we use it to train a network from scratch and evaluate its test performance. Shown in right of Figure \ref{fig:data_quality}, the quality of $\cD_\text{syn}$ stays similar with longer local training and more severe heterogeneity. This further explains the superior performance of \ourmodel with longer local training.

\begin{figure}[t!]
\small
\begin{minipage}[c]{0.48\textwidth}
    \centering
 \vspace{-0.25cm}
\begin{scriptsize}
\begin{tabular}{c!{\vrule width 0.5pt}cc!{\vrule width 0.5pt}cc}
\toprule
 &        \multicolumn{2}{c}{$\alpha=0.01$}  & \multicolumn{2}{c}{$\alpha=0.04$}\\
Method & MLP  &  ConvNet  &  MLP  &  ConvNet \\
\midrule
FedAVG     &   65.64$\pm$1.69&  74.51$\pm$1.32&   73.26$\pm$1.49& 81.74$\pm$1.98\\
FedProx   &  68.09$\pm$1.47& 76.88$\pm$1.83& 79.83$\pm$1.70 & 83.06$\pm$2.53\\
Scaffold   &  67.60$\pm$1.53& 77.92$\pm$0.87&   78.09$\pm$1.35&82.25$\pm$1.35\\
\midrule
FedDF$^*$    & 64.59$\pm$1.70& 72.36$\pm$2.08& 77.20$\pm$1.58  & 81.65$\pm$0.97\\
FedBE$^*$   & 65.97$\pm$1.64& 72.33$\pm$1.79 &   75.42$\pm$1.35 & 81.31$\pm$1.25\\
ABAVG$^*$    & 69.19$\pm$1.50& 75.98$\pm$1.99&  81.64$\pm$1.20 & 84.44$\pm$1.84\\
\midrule
FedGen$^\dagger$   & 68.67$\pm$1.45& 75.59$\pm$1.12&    77.94$\pm$1.38&56.60$\pm$1.08\\
FedMix $^\dagger$  & 70.30$\pm$0.92& 81.34$\pm$0.68&   81.95$\pm$0.64 & 84.23$\pm$0.50\\
\textbf{DynaFed$^\dagger$}    &   \textbf{73.89$\pm$0.24} &  \textbf{87.52$\pm$0.15} &    \textbf{83.54$\pm$0.42}   & \textbf{89.45$\pm$0.11}\\
\bottomrule
\end{tabular}
\vspace{-5pt}
\captionof{table}{Performance comparison across different network architectures. We conduct the experiment on FMNIST dataset using MLP and ConvNet to demonstrate the generalization of \ourmodel for different network architectures.}
\label{tab:cross_arch}
\end{scriptsize}
\end{minipage}
\hspace{2mm}
\begin{minipage}[c]{0.48\textwidth}
    \centering
    \begin{subfigure}[b]{0.48\textwidth}
        \centering
        \includegraphics[width=\textwidth]{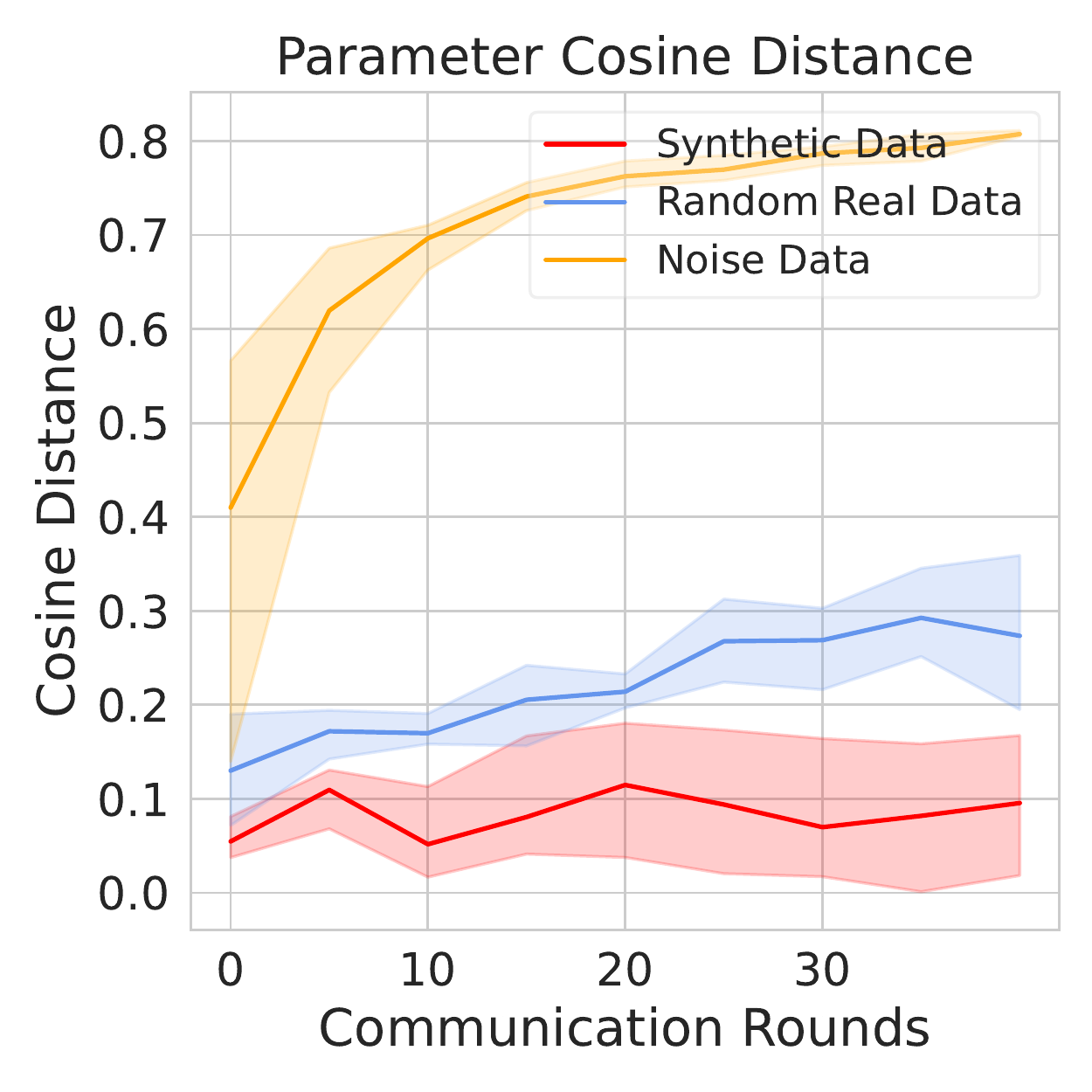}
    \end{subfigure}
    \hfill
    \begin{subfigure}[b]{0.48\textwidth}   
        \centering 
        \includegraphics[width=\textwidth]{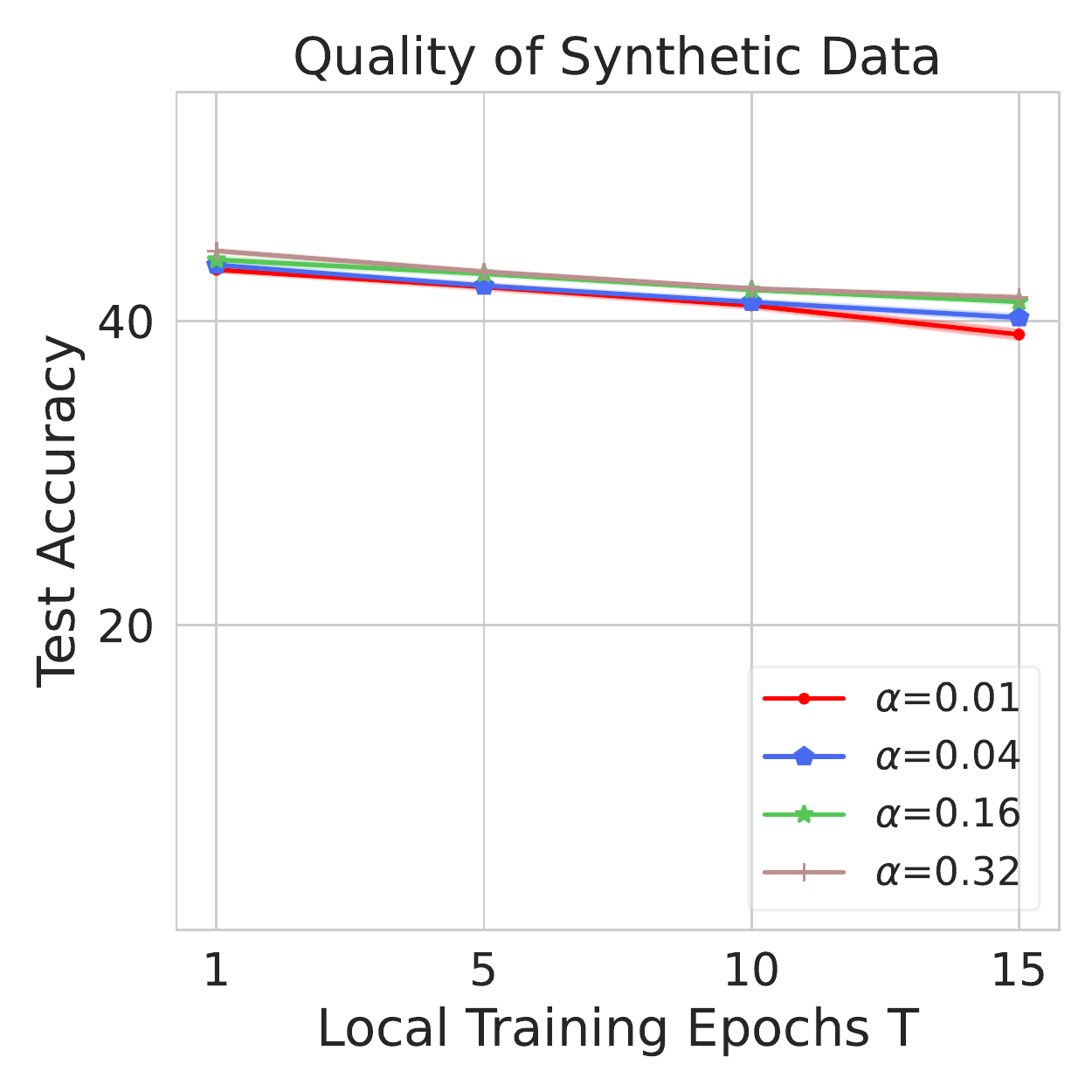}
    \end{subfigure}
    \hfill
    \vspace{-3mm}
    \caption[ The average and standard deviation of critical parameters ]
    {\small Left: the cosine distance between the target checkpoint $\bw^\text{t+s}$ and the parameters $\Tilde{\bw}$ obtained by training with different datasets from $\bw^\text{t}$ for $s'$ steps. We can see that the distance derived using $\cD_\text{syn}$ is constantly smaller compared with other data. Right: Test performances of a model trained from scratch using only $\cD_\text{syn}$ generated under various local epochs and client heterogeneity. The two plots together verify the quality of the generated $\cD_\text{syn}$.} 
    \label{fig:data_quality}
    \vspace{-3mm}
\end{minipage}
\end{figure}

\subsection{Architecture Generalization and Efficiency}
To showcase the generalization ability of our approach over different network architecture choices, we conduct experiments on FMNIST using different network architectures. Specifically, we choose ConvNet and MLP following \cite{zhu2021data, mcmahan2017communication}. As shown in Table \ref{tab:cross_arch}, under various client data heterogeneity, our method demonstrates superior performances with both network architectures.

Thanks to the rich knowledge of global data distribution contained in $\cD_\text{syn}$, using it to refine the global model greatly boosts the convergence speed of training. As shown in Figure \ref{fig:process_curve}, as soon as $\cD_\text{syn}$ is used to refine the global model, its performance rapidly increases to a reasonable accuracy, which reduces many rounds of communication. In Table \ref{tab:Convergence_speed}, we also quantitively compare the convergence speed of different FL algorithms by showing the number of communication rounds needed to reach the highest test accuracy achievable by the baselines. As can be observed, our \ourmodel requires only less than 20\% communication rounds to reach a target accuracy comparable to other methods.

\section{Conclusion}
In this paper, we propose a novel approach \ourmodel to tackle the data heterogeneity issue, which synthesizes a pseudo dataset to extract the essential knowledge of the global data distribution from the dynamics of the global model's trajectory. Extensive experiments show that \ourmodel demonstrates relative improvement over the the FedAvg baseline up to 82.2\% on CIFAR100. Further, we believe our work is able to provide insights for extracting global information on the server side, which goes beyond tackling the data heterogeneity issue.

{\small
\bibliographystyle{ieee_fullname}
\bibliography{egbib}
}

\onecolumn
\appendix
\section{Proofs of Theorem 1}

From lines 502 to 518, we rewrite the detailed steps of \ourmodel for the convenience of analysis. The key idea is to treat our aggregation as a local training process where each client works  with exactly the same parameters $\bw_m^t$ and gradients in each time step. And thus $\bar{\bw}^t =\bw_1^t=\cdots=\bw_M^t$ in the finetuning/aggregation process. Therefore, the convergence of $\bar{\bw}^t$ is actually that of our learned model. 

Firstly, we need the following lemmas.

\begin{assumption}\label{ass:variance}
The variance of the stochastic gradients in each client is bounded, i.e., 
$$\mathbb{E}\|\nabla \cL_m(\bw_m^t, \xi_m^t) - \nabla \cL_m(\bw_m^t, \cD_m)\|^2 \leq \sigma_m^2,$$
where $\xi_m^t$ is sampled from $\cD_m$ uniformly at random, $m=1,2,\ldots, M$. 
\end{assumption}

\begin{assumption}\label{ass:squared-norm}
The expectation of $\|\nabla \cL_m(\bw_m^t, \xi_m^t)\|^2$ is bounded, i.e., 
\begin{align}
    \mathbb{E}\|\nabla \cL_m(\bw_m^t, \xi_m^t)\|^2 \leq G^2,
\end{align}
where  $\xi_m^t$ is sampled from $\cD_m$ uniformly at random, $m=1,2,\ldots, M$. 
\end{assumption}

\begin{assumption}\label{ass:biased-sgd}
For our $\cD_\text{syn}$, we assume 
\begin{align}
   \| \nabla\cL(\bw, \cD_\text{syn}) - \nabla \cL(\bw, \cD)\| \leq \delta \|\nabla \cL(\bw, \cD)\| + \epsilon,
\end{align}
where $\delta \geq 0$ and $\epsilon \geq 0$ are two small scalars.
\end{assumption}

Assumptions \ref{ass:variance} and $\ref{ass:squared-norm}$ are widely used in stochastic optimization as well as FL\cite{li2019convergence}.  Assumption \ref{ass:biased-sgd} is a standard assumption for biased gradient \cite{hu2021analysis}. We restate Theorem 1 as follows:

\begin{theorem}[Convergence]\label{thm:convergence-appendix} Under Assumptions \ref{ass:variance}, \ref{ass:squared-norm} and \ref{ass:biased-sgd}, for $\tilde{L}$-smooth, $\mu$-strongly convex loss functions $\ell(\cdot, \cdot)$, we assume $\delta \tilde{L}< \mu$. Let $\eta_t = \frac{c}{t+\gamma}$ for a proper constant $c$ and $\gamma$. Then, \ourmodel satisfies 
\begin{align}
    \mathbb{E}\cL( \bar{\bw}^T, \cD) - \cL( {\bw}^*, \cD) \leq \frac{ C}{T},
\end{align}
where  $\bw^*$ is the minimum of $\cL(\bw, \cD)$ and  $C$ is a constant, whose detailed formula is given in Eqn.(\ref{eqn:C}).
\end{theorem}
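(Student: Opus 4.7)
The plan is to run the standard virtual-sequence analysis of FedAvg, augmented to handle a biased-gradient phase. Introduce $\Omega_t := \mathbb{E}\|\bar{\bw}^t - \bw^*\|^2$; since aggregation only redistributes iterates, the virtual average evolves continuously as $\bar{\bw}^{t+1} = \bar{\bw}^t - \eta_t \bg_t$ for an appropriate stochastic direction $\bg_t$. The goal is a uniform one-step recursion $\Omega_{t+1} \le (1 - a\eta_t)\Omega_t + b\eta_t^2$ with $a>0$, after which a standard induction with $\eta_t = c/(t+\gamma)$ yields $\Omega_T = O(1/T)$, and $\tilde{L}$-smoothness converts this to the stated excess-loss bound $C/T$.

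The local-training steps ($t \in \cI$) follow the usual FedAvg decomposition. A \emph{divergence lemma} bounds $\sum_m \alpha_m \mathbb{E}\|\bar{\bw}^t - \bw_m^t\|^2$ by $O(\eta_t^2 (\tau_1-1)^2 G^2)$ via Assumption \ref{ass:squared-norm}, by unrolling at most $\tau_1-1$ SGD steps since the last aggregation. A \emph{variance lemma} bounds $\mathbb{E}\|\bg_t - \mathbb{E}\bg_t\|^2$ by $\sum_m \alpha_m^2 \sigma_m^2$ via Assumption \ref{ass:variance}. The mean cross-term $-2\eta_t \langle \bar{\bw}^t - \bw^*, \sum_m \alpha_m \nabla \cL_m(\bw_m^t, \cD_m)\rangle$ is split by adding and subtracting $\nabla \cL_m(\bar{\bw}^t, \cD_m)$: strong convexity gives the $-\mu\eta_t \Omega_t$ contraction, while the extra term $-2\eta_t\langle \bw_m^t - \bar{\bw}^t, \nabla \cL_m(\bw_m^t, \cD_m)\rangle$ is bounded by Cauchy--Schwarz and absorbed into the divergence lemma. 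The net contribution is $(1-\mu\eta_t)\Omega_t + O(\eta_t^2)$.

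The finetuning steps ($t \notin \cI$) are where the main technical obstacle lies, since here the descent direction is biased: $\mathbb{E}\bg_t = \nabla \cL(\bar{\bw}^t, \cD_\text{syn})$, which differs from the true gradient by at most $\delta\|\nabla\cL(\bar{\bw}^t, \cD)\| + \epsilon$ via Assumption \ref{ass:biased-sgd}. All clients share the same parameter during finetuning, so the divergence term vanishes. Expanding $-2\eta_t \langle \bar{\bw}^t - \bw^*, \nabla\cL(\bar{\bw}^t, \cD_\text{syn})\rangle$ produces the true-gradient inner product (handled by strong convexity) plus a bias correction. Controlling the multiplicative bias via Cauchy--Schwarz and the smoothness identity $\|\nabla\cL(\bar{\bw}^t, \cD)\|^2 \le 2\tilde{L}(\cL(\bar{\bw}^t,\cD) - \cL(\bw^*,\cD))$ costs at most $\delta\tilde{L}\eta_t \Omega_t$ of the contraction; this is precisely why the hypothesis $\delta \tilde{L} < \mu$ appears, leaving an effective factor $1 - (\mu - \delta\tilde{L})\eta_t$. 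The additive $\epsilon$-bias is absorbed into the $O(\eta_t^2)$ slack via Young's inequality, contributing a residual proportional to $\epsilon^2/(\mu - \delta\tilde{L})$.

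Combining both cases yields a uniform recursion $\Omega_{t+1} \le \bigl(1 - (\mu - \delta\tilde{L})\eta_t\bigr)\Omega_t + B \eta_t^2$, where $B$ aggregates the client-drift slack $O((\tau_1-1)^2 G^2)$ (active only when $t\in\cI$), the stochastic variance $\max\{\sum_m \alpha_m^2 \sigma_m^2,\ \sigma_\text{syn}^2\}$, and the bias residual $O(\epsilon^2)$. Choosing $c$ so that $c(\mu - \delta\tilde{L}) > 1$ and $\gamma$ so that $\eta_t$ stays within the range admitted by the smoothness arguments, a routine induction analogous to the standard diminishing-step-size lemma for FedAvg gives $\Omega_T \le \max\{c^2 B / (c(\mu - \delta\tilde{L}) - 1),\ (\gamma+1)\Omega_0\}/(T+\gamma)$. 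Applying $\tilde{L}$-smoothness to $\cL(\bar{\bw}^T, \cD) - \cL(\bw^*, \cD) \le (\tilde{L}/2)\Omega_T$ then produces the $C/T$ bound, with $C$ the constant formed by the above bookkeeping.
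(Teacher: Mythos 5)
Your proposal takes essentially the same route as the paper's proof: the same virtual average $\bar{\bw}^t$ and recursion on $\Omega_t=\mathbb{E}\|\bar{\bw}^t-\bw^*\|^2$, a separate one-step bound for local-training steps ($t\in\cI$, via the standard FedAvg divergence/variance decomposition) and for finetuning steps ($t\notin\cI$, via the biased-gradient analysis under Assumption 3), a unified contraction $\Omega_{t+1}\le(1-(\mu-\delta\tilde{L})\eta_t)\Omega_t+B\eta_t^2$, the standard induction for $\eta_t=c/(t+\gamma)$ with $c(\mu-\delta\tilde{L})>1$, and $\tilde{L}$-smoothness to convert to excess loss. The only structural difference is that you derive the two one-step lemmas from scratch, whereas the paper imports them wholesale by citing the FedAvg convergence analysis of Li et al.\ for the local-training step and the biased-SGD analysis of Hu et al.\ for the aggregation step; your version is more self-contained but proves nothing the paper's citations do not already supply.

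One step deserves scrutiny: you claim the additive bias $\epsilon$ is ``absorbed into the $O(\eta_t^2)$ slack via Young's inequality.'' Splitting $2\eta_t\epsilon\|\bar{\bw}^t-\bw^*\|\le\beta\eta_t\|\bar{\bw}^t-\bw^*\|^2+\eta_t\epsilon^2/\beta$ leaves a residual of order $\eta_t\epsilon^2$, which is first order in $\eta_t$; making it $O(\eta_t^2)$ would require $\beta$ independent of $\eta_t$, which destroys the contraction for small step sizes. Consequently, for $\epsilon>0$ the recursion only drives $\Omega_t$ into an $O\bigl(\epsilon^2/(\mu-\delta\tilde{L})^2\bigr)$ neighborhood of $\bw^*$ rather than to zero at rate $1/T$. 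This is not a defect unique to your argument --- the paper's Lemma 3 asserts a pure $\eta_t^2 B_{\text{agg}}$ residual by citation and thereby hides the same issue --- but a fully rigorous version of the theorem should either take $\epsilon=0$ or state convergence to a bias-dependent neighborhood.
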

Before giving the detailed proof, we need the following two lemmas. 

\begin{lemma}[One Step of Local Training]\label{lemma:1step-local-training} Under the same assumptions with Theorem \ref{thm:convergence-appendix}, for $t\in \cI$, it follows that
\begin{align}
    \mathbb{E}\|\bar{\bw}_{t+1} - \bw^* \|^2 \leq (1-\eta_t \mu)\mathbb{E}\|\bar{\bw}_{t} - \bw^* \|^2 + \eta_t^2 B_\text{loc}, \label{eqn:1step-localtraining}
\end{align}
where 
$$B_\text{loc}=\sum_{m=1}^M \alpha_m^2 \sigma_m^2 + 6\tilde{L}\Gamma +8(\tau_1-1)^2G^2 \mbox{ with } \Gamma = \cL(\bw^*, \cD) - \sum_{m=1}^M \alpha_m \min_{\bw} \cL_m(\bw, \cD_m).$$
\end{lemma}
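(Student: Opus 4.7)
The plan is to adapt the standard one-step analysis of local-SGD (in the spirit of Li et al., 2019) to the current notation, being careful that in a local-training step $t\in\cI$ the effective update is $\bar{\bw}^{t+1}=\bar{\bw}^{t}-\eta_t\bg_t$ with $\bg_t=\sum_{m=1}^M\alpha_m\nabla\ell(\bw_m^t,\xi_m^t)$, since no aggregation happens yet and the averaged iterate moves by the averaged stochastic gradient. Writing $\bar{\bg}_t=\mathbb{E}[\bg_t\mid\cF_t]=\sum_m\alpha_m\nabla\cL_m(\bw_m^t,\cD_m)$, I would expand
\[
\|\bar{\bw}^{t+1}-\bw^*\|^2=\|\bar{\bw}^{t}-\bw^*-\eta_t\bar{\bg}_t\|^2-2\eta_t\langle\bar{\bw}^{t}-\bw^*-\eta_t\bar{\bg}_t,\bg_t-\bar{\bg}_t\rangle+\eta_t^2\|\bg_t-\bar{\bg}_t\|^2,
\]
take conditional expectation to kill the cross term, and handle the two pieces separately.

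For the stochastic piece, since clients sample independently I would argue $\mathbb{E}\|\bg_t-\bar{\bg}_t\|^2=\sum_m\alpha_m^2\,\mathbb{E}\|\nabla\ell(\bw_m^t,\xi_m^t)-\nabla\cL_m(\bw_m^t,\cD_m)\|^2\le\sum_m\alpha_m^2\sigma_m^2$ by Assumption~1, which contributes the first summand of $B_\text{loc}$. For the deterministic piece I would insert $\pm\bw_m^t$ in the inner product and write
\[
-2\eta_t\langle\bar{\bw}^{t}-\bw^*,\bar{\bg}_t\rangle=-2\eta_t\sum_m\alpha_m\langle\bar{\bw}^{t}-\bw_m^t,\nabla\cL_m(\bw_m^t,\cD_m)\rangle-2\eta_t\sum_m\alpha_m\langle\bw_m^t-\bw^*,\nabla\cL_m(\bw_m^t,\cD_m)\rangle.
\]
The first sum is controlled by Young's inequality, producing a client-drift term $\sum_m\alpha_m\|\bar{\bw}^t-\bw_m^t\|^2$ plus an $\eta_t\sum_m\alpha_m\|\nabla\cL_m(\bw_m^t,\cD_m)\|^2$ term which I would convert via $\tilde{L}$-smoothness to $2\tilde{L}(\cL_m(\bw_m^t,\cD_m)-\cL_m^*)$. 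The second sum is bounded using $\mu$-strong convexity of $\cL_m$ to extract the contraction factor $(1-\eta_t\mu)\|\bar{\bw}^t-\bw^*\|^2$ together with a $-2\eta_t(\cL_m(\bw_m^t,\cD_m)-\cL_m(\bw^*,\cD_m))$ term. Regrouping, the function-value pieces combine into $-\gamma_t\sum_m\alpha_m(\cL_m(\bw_m^t,\cD_m)-\cL(\bw^*,\cD))+(2\eta_t-\gamma_t)\Gamma$ with $\gamma_t=2\eta_t(1-3\tilde{L}\eta_t)$, and I would then use smoothness/convexity once more (as in the paper's other lemma sketch) to convert the remaining suboptimality gap into $6\tilde{L}\eta_t^2\Gamma$, giving the middle summand of $B_\text{loc}$.

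Finally I would invoke Lemma~3 (Divergence Bound) to kill the leftover drift. In the local-training regime $t\in\cI$ the bound there gives $\mathbb{E}\sum_m\alpha_m\|\bar{\bw}^t-\bw_m^t\|^2\le 4\eta_t^2(\tau_1-1)^2 G^2$, and since the Young-inequality and smoothness manipulations each spawn at most one copy of this drift with coefficient at most $2$, the total drift contribution is at most $8\eta_t^2(\tau_1-1)^2G^2$, matching the last summand of $B_\text{loc}$. Collecting everything and taking unconditional expectations yields \eqref{eqn:1step-localtraining}.

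The main obstacle is bookkeeping the constants so that the drift-versus-progress trade-off actually closes with coefficient $8$: the Young splitting of the cross term, the $\|\bar{\bg}_t\|^2$ bound via smoothness, and the $6\tilde{L}\Gamma$ absorption step each eat a fraction of the available $\eta_t^2$ budget, so I need the implicit constant in $\eta_t=c/(t+\gamma)$ to be small enough (e.g.\ $\eta_t\tilde{L}\le 1/6$ or similar) to guarantee $\gamma_t\ge\eta_t$ and to absorb the cross terms into the $(1-\eta_t\mu)$ contraction without generating extra positive function-value residuals. Once the step-size condition is fixed, the rest reduces to routine but careful algebra; the assumption $\delta\tilde{L}<\mu$ is not needed here but will be essential when the analogous lemma for $t\notin\cI$ (biased finetuning steps) is combined with this one to produce the final $\mathcal{O}(1/T)$ rate in Theorem~1.
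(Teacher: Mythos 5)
Your proposal is correct and follows essentially the same route as the paper: the paper's proof of this lemma simply observes that local training in \ourmodel{} is identical to FedAvg and imports the one-step bound from the proof of Theorem~1 of Li et al.\ (2019), and your argument is a faithful reconstruction of exactly that analysis (the perturbed-iterate expansion, the variance bound $\sum_m\alpha_m^2\sigma_m^2$, the $6\tilde{L}\Gamma$ term from smoothness/strong convexity, and the factor $2\times 4\eta_t^2(\tau_1-1)^2G^2$ from the divergence bound), including the implicit step-size condition $\eta_t\lesssim 1/\tilde{L}$ that the paper leaves to the choice of $c$ and $\gamma$.
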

\begin{proof} of Lemma \ref{lemma:1step-local-training}.
Noe that in \ourmodel, the local training is equivalent to that of FedAvg, therefore, the intermediate results in \cite{li2019convergence} hold for \ourmodel. 

The result in Eqn.(\ref{eqn:1step-localtraining}) can be directly obtained from the proof of Theorem 1 in \cite{li2019convergence}.
\end{proof}
 
In our aggregation, each step is essentially a biased gradient descent method, for which we have the following lemma from \cite{hu2021analysis}.

\begin{lemma}[One Step of Aggregation \cite{hu2021analysis}] \label{lemma:1step-aggregation} Under the same assumptions with Theorem \ref{thm:convergence-appendix}, for $t\notin \cI$, it follows that
\begin{align}
    \mathbb{E}\|\bar{\bw}_{t+1} - \bw^* \|^2 \leq \rho_t^2 \mathbb{E}\|\bar{\bw}_{t} - \bw^* \|^2 + \eta_t^2 B_\text{agg}, \label{eqn:1step-aggregation}
\end{align}
where $\rho^2_t = 1- (\mu-\delta \tilde{L}) \eta_t + \mathcal{O}(\eta_t^2)$ and  $B_\text{agg}$ is a positive constant.
\end{lemma}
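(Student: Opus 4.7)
The plan is to treat Lemma~\ref{lemma:1step-aggregation} as a one-step bound for \emph{deterministic biased} gradient descent on the true risk $\cL(\cdot,\cD)$, with the bias controlled by Assumption~\ref{ass:biased-sgd}. The key observation is that when $t\notin\cI$ the algorithm is in the server-side finetuning phase: inspecting the rewritten update in the excerpt, all client parameters have already been aggregated so $\bw_m^t=\bar{\bw}^t$ for every $m$, and the recursion collapses to
\begin{equation*}
\bar{\bw}^{t+1} \;=\; \bar{\bw}^t \;-\; \eta_t\,\nabla\cL(\bar{\bw}^t,\cD_\text{syn}).
\end{equation*}
Introduce the bias $\mathbf{e}_t:=\nabla\cL(\bar{\bw}^t,\cD_\text{syn})-\nabla\cL(\bar{\bw}^t,\cD)$, which by Assumption~\ref{ass:biased-sgd} obeys $\|\mathbf{e}_t\|\le\delta\|\nabla\cL(\bar{\bw}^t,\cD)\|+\epsilon$. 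Expanding the squared distance and separating the genuine GD step from the bias,
\begin{equation*}
\|\bar{\bw}^{t+1}-\bw^*\|^2 \;=\; \|\bar{\bw}^t-\bw^*-\eta_t\nabla\cL(\bar{\bw}^t,\cD)\|^2 \;-\; 2\eta_t\langle\bar{\bw}^t-\bw^*-\eta_t\nabla\cL(\bar{\bw}^t,\cD),\,\mathbf{e}_t\rangle \;+\; \eta_t^2\|\mathbf{e}_t\|^2.
\end{equation*}

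For the first piece I apply the textbook contraction for smooth strongly-convex gradient descent: combining $\langle\nabla\cL(\bar{\bw}^t,\cD),\bar{\bw}^t-\bw^*\rangle\ge\mu\|\bar{\bw}^t-\bw^*\|^2$ (strong convexity, using $\nabla\cL(\bw^*,\cD)=0$) with $\|\nabla\cL(\bar{\bw}^t,\cD)\|\le\tilde{L}\|\bar{\bw}^t-\bw^*\|$ (smoothness), so that for $\eta_t\le 1/\tilde{L}$ one has $\|\bar{\bw}^t-\bw^*-\eta_t\nabla\cL(\bar{\bw}^t,\cD)\|^2\le(1-\mu\eta_t)\|\bar{\bw}^t-\bw^*\|^2$. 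For the cross term, Cauchy--Schwarz together with the Assumption~\ref{ass:biased-sgd} bound on $\|\mathbf{e}_t\|$ and smoothness yields
\begin{equation*}
|\langle\bar{\bw}^t-\bw^*,\mathbf{e}_t\rangle| \;\le\; \delta\tilde{L}\|\bar{\bw}^t-\bw^*\|^2 \;+\; \epsilon\|\bar{\bw}^t-\bw^*\|,
\end{equation*}
while $\langle\nabla\cL(\bar{\bw}^t,\cD),\mathbf{e}_t\rangle$ and $\|\mathbf{e}_t\|^2$, both pre-multiplied by $\eta_t^2$, are controlled the same way via $(a+b)^2\le 2a^2+2b^2$, contributing $O(\eta_t^2)\|\bar{\bw}^t-\bw^*\|^2+O(\eta_t^2\epsilon^2)$.

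Collecting everything gives a contraction coefficient $1-\mu\eta_t+2\delta\tilde{L}\,\eta_t+O(\eta_t^2)$ together with a residual $2\eta_t\epsilon\|\bar{\bw}^t-\bw^*\|$ linear in the distance. I then apply Young's inequality $2\eta_t\epsilon\|\bar{\bw}^t-\bw^*\|\le\alpha\eta_t\|\bar{\bw}^t-\bw^*\|^2+\eta_t\epsilon^2/\alpha$, choosing $\alpha$ to siphon this piece into the leading contraction; the hypothesis $\delta\tilde{L}<\mu$ supplies the strictly positive slack needed for this absorption to preserve an $\eta_t$-order contraction. The outcome is exactly
\begin{equation*}
\mathbb{E}\|\bar{\bw}^{t+1}-\bw^*\|^2 \;\le\; \bigl[1-(\mu-\delta\tilde{L})\eta_t+O(\eta_t^2)\bigr]\,\mathbb{E}\|\bar{\bw}^t-\bw^*\|^2 \;+\; \eta_t^2 B_\text{agg},
\end{equation*}
matching the stated form of $\rho_t^2$; the expectation is trivial here since the finetuning step is deterministic conditional on $\bar{\bw}^t$, and $B_\text{agg}$ collects the $\tilde{L}$, $\delta$, $\epsilon$-dependent constants.

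The main obstacle is the additive bias $\epsilon$: a naive Young's split of $2\eta_t\epsilon\|\bar{\bw}^t-\bw^*\|$ contributes an $O(\eta_t)$ rather than the desired $O(\eta_t^2)$ term to the additive constant, inflating $B_\text{agg}$ beyond a constant. The resolution, as in the biased-SGD template of Hu et al.~\cite{hu2021analysis}, is to spend part of the strong-convexity excess $\mu-\delta\tilde{L}$ guaranteed by the assumption to cover the $\epsilon$-cross term, which is exactly why the stated contraction is $\mu-\delta\tilde{L}$ rather than $\mu-2\delta\tilde{L}$. Aside from this single piece of bookkeeping, the remainder of the argument is a routine second-moment unrolling, which is why the lemma can simply be cited from~\cite{hu2021analysis}.
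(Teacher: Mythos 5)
Your derivation is correct in substance, but it takes a genuinely different route from the paper: the paper offers no computation at all for this lemma, stating only that it ``is actually one of the implications of Theorem 1 in \cite{hu2021analysis}'' and deferring to Sections 1 and 3.2 of that reference, whereas you reconstruct a self-contained elementary proof (collapse of the update to deterministic biased gradient descent for $t\notin\cI$, the three-term expansion isolating the bias $\mathbf{e}_t$, the standard strongly-convex contraction for the unbiased part, and Cauchy--Schwarz plus smoothness for the cross terms). Your version is more transparent and makes visible exactly where the hypothesis $\delta\tilde{L}<\mu$ is consumed; the paper's citation buys brevity and shifts the burden of the delicate bookkeeping onto the sequential-SDP/LMI machinery of \cite{hu2021analysis}. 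One caveat worth making explicit: the obstacle you flag at the end is real and your proposed fix does not fully dispose of it. Absorbing $2\eta_t\epsilon\|\bar{\bw}^t-\bw^*\|$ by Young's inequality removes the distance-dependent piece into the contraction factor, but leaves a residual of order $\eta_t\epsilon^2$, which is $\mathcal{O}(\eta_t)$ rather than $\mathcal{O}(\eta_t^2)$; with a persistent additive bias $\epsilon>0$ one can only converge to an $\mathcal{O}(\epsilon)$ neighborhood of $\bw^*$, so the lemma as stated (constant $B_\text{agg}$ multiplying $\eta_t^2$) is only exact when $\epsilon$ is treated as higher order or zero. This is a looseness in the lemma itself that the paper's bare citation conceals and that your write-up at least surfaces; if you want the recursion to feed cleanly into the $\mathcal{O}(1/T)$ argument of Theorem~\ref{thm:convergence-appendix}, you should either set $\epsilon=0$ in Assumption~\ref{ass:biased-sgd} or carry the $\eta_t\epsilon^2$ term explicitly and accept a neighborhood-convergence statement.
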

The lemma above is actually one of the implications of Theorem 1 in  \cite{hu2021analysis}. Please refer to Sections 1 and 3.2 of \cite{hu2021analysis} for more details.  
\begin{remark}
Note that when $t\notin \cI$, we perform model aggregation. Therefore, Lemma \ref{lemma:1step-aggregation} demonstrates that each step in our aggregation process can reduce the expectation of the squared distance between $\bar{\bw}_{t}$ and the optimal solution $\bw^*$, i.e., our aggregation can improve equality of the aggregated model during fintuning. This is benefited from our synthesized data $\cD_\text{syn}$. It is consistent with our exmperimental results in Figure 3 in the main paper.
\end{remark}

Now we turn to prove Theorem \ref{thm:convergence-appendix}. 
\begin{proof} of Theorem \ref{thm:convergence-appendix}:
From Eqn.(\ref{eqn:1step-aggregation}), we know when $\eta_t  = \frac{c}{t+\gamma}$ is sufficiently small, which can be satisfied by choosing proper $c$ or sufficiently large $\gamma$, the following holds for $t\notin \cI$: 
\begin{align}
      \mathbb{E}\|\bar{\bw}_{t+1} - \bw^* \|^2 \leq (1-\tilde{\mu} \eta_t) \mathbb{E}\|\bar{\bw}_{t} - \bw^* \|^2 + \eta_t^2 B_\text{agg},\label{eqn:1step-aggregation-1}
\end{align}
where $\tilde{\mu}$ can be any positive constant with $\tilde{\mu}< \mu - \delta \tilde{L}$.

By defining $\tilde{B} = \max\{B_\text{loc}, B_\text{agg}\}$, we can unify the Eqn.(\ref{eqn:1step-localtraining}) and Eqn.(\ref{eqn:1step-aggregation}) into  
\begin{align}
      \mathbb{E}\|\bar{\bw}_{t+1} - \bw^* \|^2 \leq (1-\tilde{\mu} \eta_t) \mathbb{E}\|\bar{\bw}_{t} - \bw^* \|^2 + \eta_t^2 \tilde{B}, \mbox{ for all } t.
\end{align}
We assume $c\tilde{\mu}>1$, which can be satisfied by choosing proper $\gamma$ to make Eqn.(\ref{eqn:1step-aggregation-1}) hold.  Let 
\begin{align}
    v=\max\{\frac{c^2\tilde{B}}{c\tilde{\mu}-1}, (\gamma+1)\mathbb{E}\|\bar{\bw}_{1} - \bw^* \|^2\}. \label{eqn:v}
\end{align}
We claim that 
\begin{align}
    \mathbb{E}\|\bar{\bw}_{t} - \bw^* \|^2< \frac{v}{t+\gamma}. \label{eqn:colcusion}
\end{align} We prove this claim by induction. 
Firstly Eqn.(\ref{eqn:colcusion}) hold for $t=1$ due to the definition of $v$. Assume it also holds for some $t\geq 1$, we have 
\begin{align*}
    \mathbb{E}\|\bar{\bw}_{t+1} - \bw^* \|^2 &\leq (1-\tilde{\mu} \eta_t) \mathbb{E}\|\bar{\bw}_{t} - \bw^* \|^2 + \eta_t^2 \tilde{B}\nonumber\\
    &\leq (1-\tilde{\mu} \eta_t)\frac{v}{t+\gamma} + \eta_t^2 \tilde{B} \nonumber\\
    & \leq (1- \frac{\tilde{\mu}c}{t+\gamma}) \frac{v}{t+\gamma} + \frac{c^2}{(t+\gamma)^2} \tilde{B} \nonumber\\
    & \leq \frac{t+\gamma-1}{(t+\gamma)^2} v + \frac{1}{(t+\gamma)^2} \left(c^2\tilde{B} - (\tilde{\mu}c-1)v\right) \nonumber\\
    & \leq \frac{t+\gamma-1}{(t+\gamma)^2} v\nonumber\\
    & \leq \frac{v}{t+1+\gamma}.\nonumber
\end{align*}
Therefore, our claim holds for all $t>0$. 

Hence, for $\tilde{L}$-smoothness of $\cL$, we have 
\begin{align}
   \mathbb{E}\left[\cL(\bar{\bw}^T, \cD) -  \cL(\bar{\bw}^*, \cD)\right] \leq \frac{\tilde{L}}{2}\mathbb{E}\|\bar{\bw}_{T} - \bw^* \|^2 \leq \frac{ \tilde{L}v}{2(T+\gamma)} \leq \frac{C}{T},\nonumber
\end{align}
where 
\begin{align}
    C = \frac{\tilde{L}v}{2}.\label{eqn:C}
\end{align}
\end{proof}

\begin{remark}
We give the result of full device participation setting above. We would like to point out that for the setting of partial device participation, we can have a similar result with different $C$. The reason is the only difference with the full device participation setting is the local training process and the aggregation process is kept the same. The Eqn.(21) in \cite{li2019convergence} shows that Lemma \ref{lemma:1step-local-training} holds with a different constant $B$ in this setting. To avoid redundancy, we omit this result in this appendix and please refer to \cite{li2019convergence} for more details.
\end{remark}

\section{Detailed Experiment Settings}
\paragraph{Detailed Configurations} Unless specified otherwise, we follow \cite{Fedaux, chen2021bridging, xie2022optimizing} and adopt the following default configurations throughout the experiments: we run 200 global communication rounds with local epoch set to 1. In the experiments in the main paper,  a total of 80 clients are adopted, and the participation ratio in each round is set to 40\%. Experiments using 20\% participation ratio are in Table \ref{tab:cifar10_exp_pr02} and Figure \ref{fig:cifar10_process_pr02}. For the network choices, we use ConvNet~\cite{lecun1998gradient} with 3 layers, and the hidden dimension is set to 128. 
The local learning rate is set to $10^{-3}$ with Adam optimizer\cite{kingma2014adam}. 
We report the global model's average performance in the last five rounds  evaluated using the test split of the datasets. For the construction of global trajectory, we first run FedAvg~\cite{mcmahan2017communication} and use the checkpoints from the first 20 communication rounds ($L=20$). 
For the data synthesis process on the server, we use Adam optimizer with learning rate $5\times10^{-2}$ for optimizing the data and the label in the outer loop, SGD with learning rate $10^{-5}$ is used to train the network on the synthetic data.
We set the time difference $s$ between the start and end checkpoint to 5. The total number of iteration for synthesis is $N=1000$, which takes around 1 GPU hour on RTX 3080 Ti. The subsequent finetuning takes negligible time since the amount of synthesized data is set to 150.

\subsection{Detailed Descriptions of Baselines}
(1) FedAVG~\cite{mcmahan2017communication}: The most prevalent aggregation strategy for federated learning, which simply averages the weights sent by the clients to form the global model. (2)FedPROX~\cite{li2020federated}: a method that alleviates the client heterogeneity by regularizing the drift of local models with the global model via a penalty term during local training. (3)Scaffold~\cite{karimireddy2019scaffold}: a method for client heterogeneous that introduces control variates to current local gradients and performs variance reduction to stabilize training. (4)FedDF~\cite{lin2020ensembleFedDF}: a method using knowledge distillation with unlabelled server data, which distills the ensemble knowledge from the client ensemble into the global model.(5)FedBE~\cite{chen2020fedbe}: a method based on FedDF, which uses bayesian ensemble-based knowledge distillation with unlabelled server data.(6) ABAVG~\cite{xiao2021novel}: An method using validation accuracy to reweight the clients, which needs labelled server data. (7) FedGen \cite{zhu2021data}: A data-free knowledge distillation method that trains a generator that generates an embedding based on the class label. This generator is trained using the global model as the discriminator. The generator is sent to the clients to help local training.
(8) FedMix\cite{yoon2021fedmix}: a data-sharing strategy that uses linear combination between data points to preserve client privacy, which still has the potential to leak privacy, as stated in the original paper.

\section{Experiment with Other Participation Ratio}
To verify our \ourmodel is able to work well under different client participation ratios, we conduct additional experiments on CIFAR10 with the participation ratio set to 20\%. As shown in Table \ref{tab:cifar10_exp_pr02} and Figure \ref{fig:cifar10_process_pr02}, our method demonstrates superior performances against other baseline methods.

 \begin{table}[t!]
    \centering
 \vspace{-0.25cm}
\begin{tabular}{cccc}
\toprule
Method &   $\alpha=0.01$ & $\alpha=0.04$& $\alpha=0.16$ \\
\midrule
FedAVG     &   33.62$\pm$4.36 & 50.36$\pm$3.10&68.05$\pm$1.39\\
FedProx   &  39.87$\pm$2.34  & 52.78$\pm$2.69 & 69.99$\pm$1.08\\
Scaffold    &   38.65$\pm$2.21  & 53.13$\pm$1.35 & 70.01$\pm$0.87\\
\midrule
FedDF$^*$     & 35.25$\pm$2.90  & 50.02$\pm$3.34&  68.82$\pm$1.07\\
FedBE$^*$   & 29.98$\pm$3.02  & 48.97$\pm$3.86&   68.84$\pm$0.96\\
ABAVG    &  37.26$\pm$2.89 & 57.88$\pm$0.78&   72.05$\pm$0.88\\
\midrule
FedGen$^\dagger$   &  36.28$\pm$3.54 & 52.11$\pm$2.36&70.17 $\pm$1.20 \\
FedMix$^\dagger$   & 46.77$\pm$1.93 & 59.80$\pm$1.34&  70.59$\pm$0.31\\
\textbf{DynaFed$^\dagger$}    &   \textbf{62.53$\pm$0.57} & \textbf{67.54$\pm$0.44}   & \textbf{73.59$\pm$0.12}\\
\bottomrule
\vspace{-5mm}
\end{tabular}
\caption{Comparison of test performances on CIFAR10 with different degrees of data heterogeneity $\alpha$. The client participation ratio per round is set to 20\%. Our \ourmodel outperforms other approaches by a large margin, and the superiority is more evident under more severe heterogeneity. Specifically, \ourmodel has a relative performance gain of 86\% over the FedAvg baseline when $\alpha=0.01$. }\label{tab:cifar10_exp_pr02}
\end{table}

\begin{figure}
    \centering
    \begin{subfigure}[b]{0.3\textwidth}
        \centering
        \includegraphics[width=\textwidth]{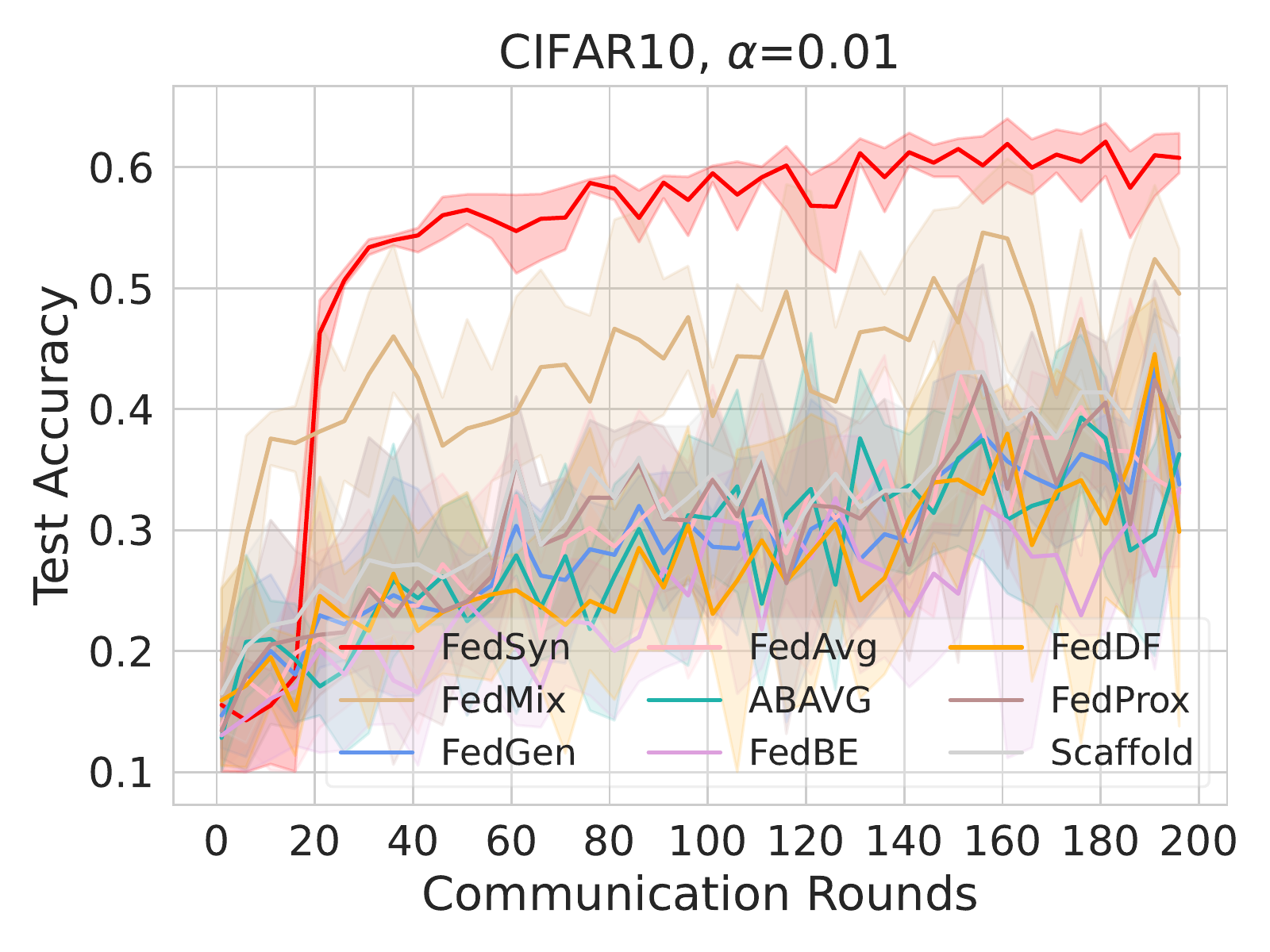}
    \end{subfigure}
    \begin{subfigure}[b]{0.3\textwidth}
        \centering
        \includegraphics[width=\textwidth]{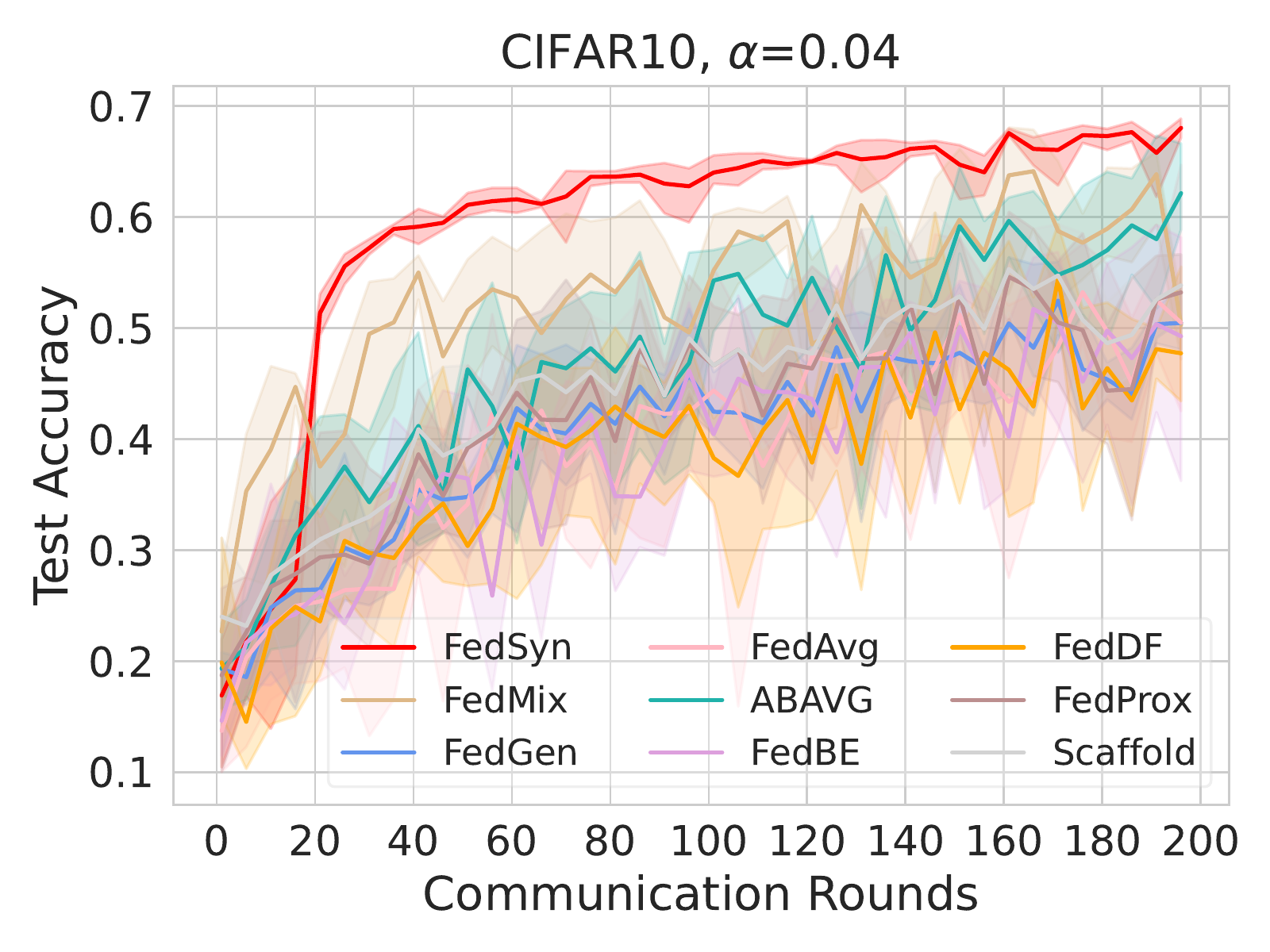}
    \end{subfigure}
    \begin{subfigure}[b]{0.3\textwidth}   
        \centering 
        \includegraphics[width=\textwidth]{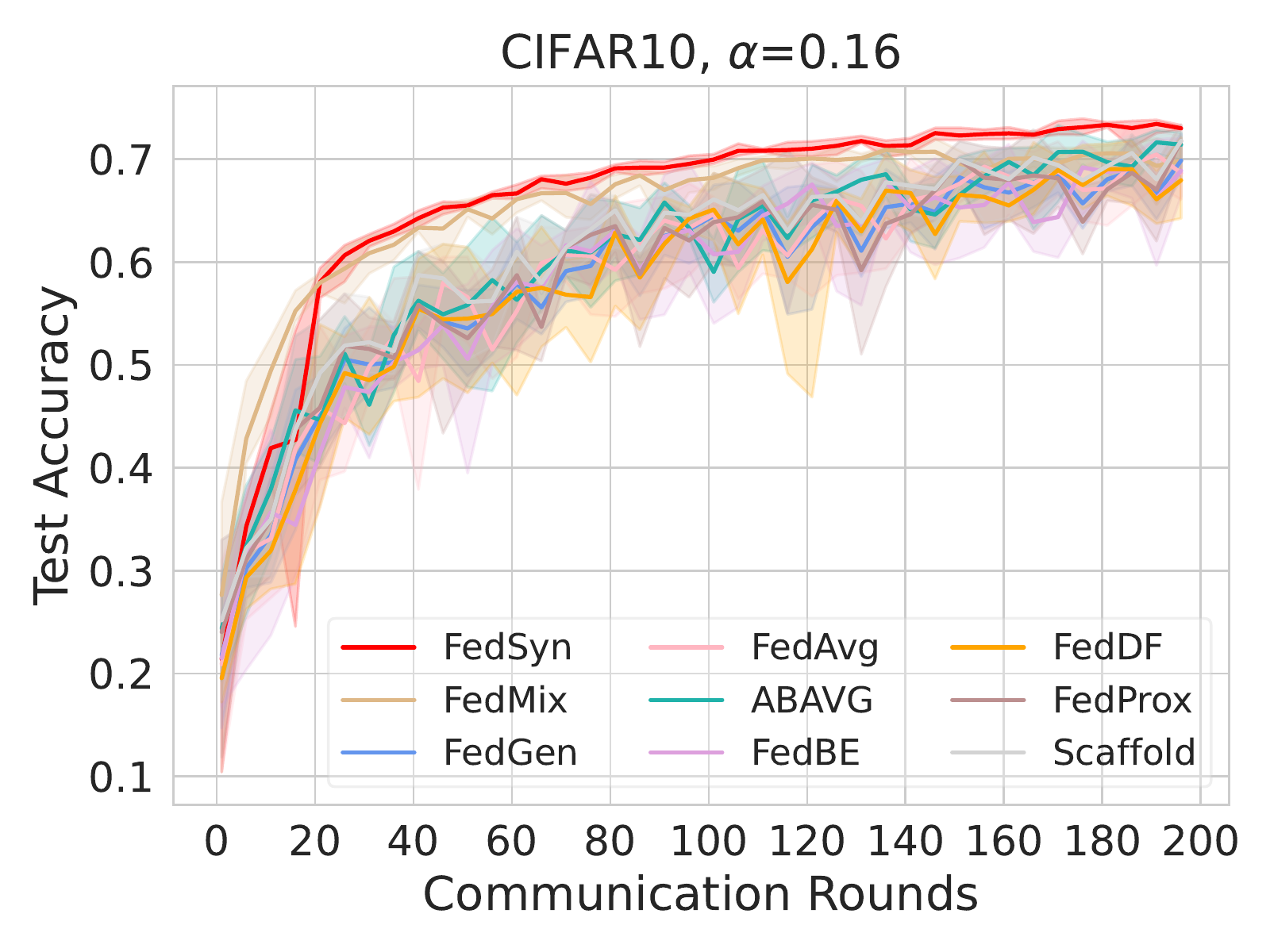}
    \end{subfigure}
    \caption[ The average and standard deviation of critical parameters ]
    {\small Visualization of global model's test performance on CIFAR10 with 20\% participation ratio throughout the global communication rounds. We can see that the global model rapidly converges to a satisfactory test accuracy once $\cD_\text{syn}$ participates in refining the global model. Furthermore, $\cD_\text{syn}$ also helps reduce the fluctuation of model performances between communication rounds, which significantly boosts the training stability. \ourmodel requires less than 10\% communication rounds to achieve comparable performance with the baseline methods.} 
    \label{fig:cifar10_process_pr02}
    \vspace{-5mm}
\end{figure}
\end{document}